\theoremstyle{plain}
\newtheorem{theorem}{Theorem}[section]
\newtheorem{proposition}[theorem]{Proposition}
\theoremstyle{definition}
\theoremstyle{remark}
\def\eqref#1{equation~\ref{#1}}
\def\1{\bm{1}}
\def\rva{{\mathbf{a}}}
\def\rve{{\mathbf{e}}}
\def\rvf{{\mathbf{f}}}
\def\rvh{{\mathbf{h}}}
\def\rvk{{\mathbf{k}}}
\def\rvm{{\mathbf{m}}}
\def\rvq{{\mathbf{q}}}
\def\rvr{{\mathbf{r}}}
\def\rvv{{\mathbf{v}}}
\def\rvw{{\mathbf{w}}}
\def\rvx{{\mathbf{x}}}
\def\rvy{{\mathbf{y}}}
\def\rvz{{\mathbf{z}}}
\def\rmI{{\mathbf{I}}}
\def\rmP{{\mathbf{P}}}
\def\rmR{{\mathbf{R}}}
\def\rmW{{\mathbf{W}}}
\DeclareMathAlphabet{\mathsfit}{\encodingdefault}{\sfdefault}{m}{sl}
\SetMathAlphabet{\mathsfit}{bold}{\encodingdefault}{\sfdefault}{bx}{n}
\def\gD{{\mathcal{D}}}
\def\gE{{\mathcal{E}}}
\def\gL{{\mathcal{L}}}
\def\gN{{\mathcal{N}}}
\def\gT{{\mathcal{T}}}
\def\gX{{\mathcal{X}}}
\def\sR{{\mathbb{R}}}
\newcommand{\E}{\mathbb{E}}
\newcommand{\x}{\rvx}
\newcommand{\h}{\rvh}
\newcommand{\ie}{\emph{i.e.}}
\newcommand{\eg}{\emph{e.g.}}
\newcommand{\dd}{\text{d}}
\newcommand{\method}{GeoTDM\xspace}
\title{Geometric Trajectory Diffusion Models}
\author{%
 Jiaqi Han,
 Minkai Xu,
 Aaron Lou,
 Haotian Ye,
 Stefano Ermon \\
Stanford University \\
}
\begin{document}

\maketitle

\doparttoc 
\faketableofcontents 


\begin{abstract}
\looseness=-1
Generative models have shown great promise in generating 3D geometric systems, which is a fundamental problem in many natural science domains such as molecule and protein design. 
However, existing approaches only operate on static structures, neglecting the fact that physical systems are always dynamic in nature. In this work, we propose geometric trajectory diffusion models (\method), the first diffusion model for modeling the temporal distribution of 3D geometric trajectories.
Modeling such distribution is challenging as it requires capturing both the complex spatial interactions with physical symmetries and temporal correspondence encapsulated in the dynamics. 
We theoretically justify that diffusion models with equivariant temporal kernels can lead to density with desired symmetry, and  develop a novel transition kernel leveraging SE$(3)$-equivariant spatial convolution and temporal attention.
Furthermore, to induce an expressive trajectory distribution for conditional generation, we introduce a generalized learnable geometric prior into the forward diffusion process to enhance temporal conditioning.
We conduct extensive experiments on both unconditional and conditional generation in various scenarios, including physical simulation, molecular dynamics, and pedestrian motion. Empirical results on a wide suite of metrics demonstrate that \method can generate realistic geometric trajectories with significantly higher quality.\footnote{Correspondence to Jiaqi Han:~\url{jiaqihan@stanford.edu}. Code is available at~\url{https://github.com/hanjq17/GeoTDM}.}

\end{abstract}

\section{Introduction}

Machine learning for geometric structures is a fundamental task in many natural science problems ranging from particle systems driven by physical laws~\cite{battaglia2016interaction,kipf2018neural,satorras2021en,brandstetter2022geometric,han2022learning} to molecular dynamics in biochemistry~\cite{huang2022equivariant,han2022equivariant,schutt2018schnet,Gasteiger2020Directional}.
Modeling such geometric data is challenging due to the physical symmetry constraint~\cite{thomas2018tensor,satorras2021en}, making it fundamentally different from common scalar non-geometric data such as images and text.
With the recent progress of generative models, many works have been proposed in generating 3D geometric structures like  small molecules~\cite{xu2022geodiff,satorras2021enf,hoogeboom2022equivariant,xu2023geometric} and proteins~\cite{watson2023novo,ingraham2023illuminating}, showing great promise in solving the equilibrium states of complex systems.

Despite this success, these existing methods are limited to synthesizing static structures and neglect the fact that important real-world processes evolve through time. For example, molecules and proteins are not static but always varying with molecular dynamics, which plays a vital role in analyzing possible binding activities~\cite{durrant2011molecular,hollingsworth2018molecular}. In this paper, we aim to study the generative modeling of geometric trajectories with the additional temporal dimension. While this problem is more practical and important, it is highly non-trivial with several significant challenges. 
First, geometric dynamics in 3D ubiquitously preserve physical symmetry. With global translation or rotation applied to a trajectory of molecular dynamics, the entire trajectory still describes the same dynamics and the generative model should estimate the same likelihood. 
Second, trajectories inherently contain the correspondence between frames in different timesteps, requiring generative models to hold a high capacity for capturing the temporal correlations. 
Last, moving from a single structure to a trajectory composed of multiple ones, the distribution we are interested in becomes much higher-dimensional and more diverse, considering both the initial conditions as well as potential uncertainties injected along the evolution of dynamics.

\begin{figure*}[t!]
    \centering
    \includegraphics[width=0.99\linewidth]{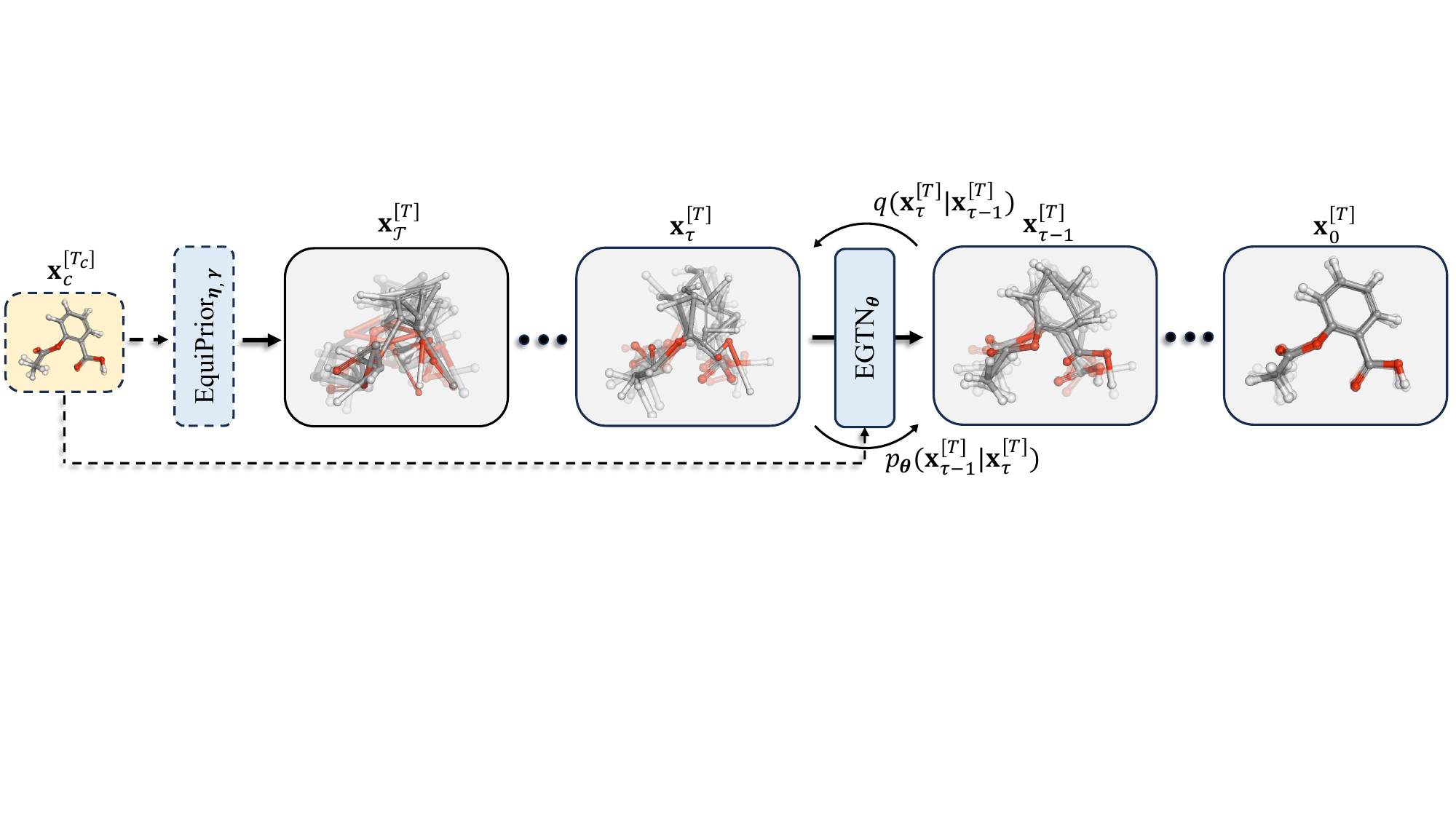}
    \caption{Overview of~\method. The forward diffusion $q$ gradually perturbs the input while the reverse process $p_{\bm\theta}$, parameterized by EGTN, denoises samples from the prior. The condition $\x_c^{[T_c]}$, if available, is leveraged to construct the equivariant prior and as a conditioning signal in EGTN. }
    \label{fig:overall}
    \vskip -0.15in
\end{figure*}

To this end, we propose geometric trajectory diffusion models (\method), a principled method for modeling the temporal distribution of geometric trajectories through diffusion models~\cite{sohl2015deep,song2019generative,song2020improved,ho2020denoising}, the state-of-the-art generative model on various domains such as images~\cite{dhariwal2021diffusion}, videos~\cite{ho2022video}, and molecules~\cite{xu2023geometric}.
Our key innovation lies in designing an equivariant temporal diffusion over geometric trajectories, with the reverse process parameterized by equivariant transition kernels, ensuring the desired physical symmetry of the generated trajectory. To better excavate the complex spatial interactions and temporal correlations, we develop a novel temporal denoising network, where we stack equivariant spatial convolution and temporal attention. Our developments not only guarantee the desirable physical symmetry of the trajectories, but also capture the complex spatial and temporal correspondence encapsulated in the dynamics of geometric systems.
Moreover, by leveraging generative modeling, \method enjoys high versatility in generating diverse yet high-quality geometric trajectories from scratch, performing interpolation and extrapolation, and optimizing noisy trajectory, all under the proposed diffusion framework.

In summary, we make the following contributions: \textbf{1.} We present~\method, a novel temporal diffusion model for generating geometric trajectories. We design the diffusion process to meet the critical equivariance in modeling both unconditional and conditional distributions over geometric trajectories. Notably, we also propose a conditional learnable equivariant prior for enhanced flexibility in temporal conditioning. \textbf{2.} To fulfill the equivariance of the denoising network, we introduce EGTN, a graph neural network that operates on geometric trajectories, which also permits conditioning upon a given trajectory using equivariant cross-attention, making it suitable to serve as the backbone for~\method. 
\textbf{3.} We evaluate our~\method on both unconditional and conditional trajectory generation tasks including particle simulation, molecular dynamics, and pedestrian trajectory prediction. \method can consistently outperform existing approaches on various metrics, with up to 56.7\% lower prediction score for unconditional generation and 16.8\% lower forecasting error for conditional generation on molecular dynamics simulation. 
We also show~\method successfully performs several additional applications, such as temporal interpolation and trajectory optimization.

\section{Related Work}

\textbf{Trajectory modeling for geometric systems.}
Modeling the dynamics of geometric data is challenging since one must capture the interactions between multiple objects. Graph neural networks~\cite{gilmer2017neural} have emerged as a natural tool to tackle this complexity~\cite{kipf2018neural,sanchez2020learning}. Subsequent works~\cite{satorras2021en,du2022se,brandstetter2022geometric,xu2024equivariant} discovered equivariance as a critical factor for promoting model generalization. Among these efforts, Radial Fields~\cite{kohler2019equivariant} and EGNN~\cite{satorras2021en} work with equivariant operations between scalars and vectors, while TFN~\cite{thomas2018tensor} and SE($3$)-Transformer~\cite{fuchs2020se} generalize to high-order spherical tensors. While considerable progress has been made, they only conduct (time) frame-to-frame prediction, which is subject to error accumulation when performing roll-out inference. Recently, EqMotion~\cite{xu2023eqmotion} approached the problem by learning to predict trajectories. By comparison, our \method leverages a generative modeling framework, which enables a wider range of tasks such as generation and interpolation.

\textbf{Generative models in geometric domain.}
There is growing interest in developing generative models for geometric data, \eg molecule generation~\cite{satorras2021enf,xu2022geodiff,hoogeboom2022equivariant,xu2023geometric}, protein generation~\cite{watson2023novo,jing2023eigenfold,yim2023se}, and antibody design~\cite{luo2022antigenspecific}. 
Recently, diffusion-based models~\cite{hoogeboom2022equivariant,xu2023geometric} have been shown to yield superior performance compared to flow-based~\cite{satorras2021enf} and VAE-based~\cite{xu2021end} approaches in many of these tasks. Despite these fruitful achievements, most existing works only produce a snapshot of the geometric system, \eg, a molecule in 3D space, whereas our~\method generalizes to generating a trajectory with multiple frames, \eg, an MD trajectory in 3D. DiffMD~\cite{wu2023diffmd} specifically tackles MD modeling using Markovian assumption, while~\method directly captures the joint distribution of all frames along the entire trajectory.

\textbf{Temporal diffusion models.} Diffusion models have been recently adapted to handle the natural temporality of data in tasks such as video generation~\cite{ho2022video,voleti2022mcvd,harvey2022flexible}, time series forecasting~\cite{rasul2021autoregressive,tashiro2021csdi}, PDE simulation~\cite{cachay2023dyffusion}, human motion synthesis~\cite{tevet2023human,yuan2023physdiff} and pedestrain trajectory forecasting~\cite{gu2022stochastic}. Distinct from these works,~\method models the temporal evolution of geometric data represented as a geometric graph and maintains the aforementioned vital equivariance constraint.

\section{Preliminaries}

\textbf{Diffusion models.} Diffusion models~\cite{sohl2015deep,ho2020denoising,song2019generative,song2020improved} are a type of latent variable generative model that feature a Markovian forward diffusion process and reverse denoising process. The forward process progressively perturbs the input $\x_0$ (\eg, image pixels or molecule coordinates) over $\gT$ steps using a Gaussian transition kernel $q(\x_\tau|\x_{\tau-1})=\gN(\x_\tau;\sqrt{1-\beta_\tau}\x_{\tau-1},\beta_\tau\rmI)$. Here, $\{\x_\tau\}_{\tau=1}^{\gT}$ are latent variables with the same dimension as the input and $\beta_\tau$ are predefined using the noise schedule such that $\x_\gT$ is close to being distributed as $\gN(\bm{0},\rmI)$. The reverse process maps back from the prior distribution with $p(\x_\gT)=\gN(\bm{0},\rmI)$ using the kernel $p_{\bm\theta}(\x_{\tau-1}|\x_\tau)=\gN(\x_{\tau-1}; \bm{\mu}_{\bm\theta}(\x_\tau,\tau), \sigma^2_\tau\rmI)$, where the variances $\sigma_\tau^2$ are usually fixed and the mean $\bm\mu_{\bm\theta}$ is parameterized by a neural network with parameters $\bm\theta$. 
The model is trained by optimizing the variational lower bound, defined as $\gL_{\mathrm{vlb}}=-\log p_{\bm\theta}(\x_0|\x_1)+D_\mathrm{KL}(q(\x_\gT|\x_0)\|p(\x_\gT))+\sum_{\tau=2}^{\gT-1}D_\mathrm{KL}(q(\x_{\tau-1}|\x_\tau,\x_0)\|p_{\bm\theta}(\x_{\tau-1}|\x_\tau))$. 
For training stability, ~\cite{song2019generative,ho2020denoising} suggest the noise-prediction objective:
\begin{align}
\gL_{\mathrm{simple}}\coloneqq\E_{\x_0,\bm\epsilon\sim\gN(\bm{0},\rmI),\tau}\lambda(\tau)\left[\|  \bm\epsilon-\bm\epsilon_{\bm\theta}(\x_\tau,\tau)\|^2\right],
\end{align}
where $\x_0\sim p_\mathrm{data}$,
$\tau\sim\mathrm{Unif}(1,\gT)$, the weighting factors $\lambda(\tau)$ are typically set to 1 to promote sample quality, $\x_\tau=\sqrt{\bar{\alpha}_\tau}\x_0 + \sqrt{1-\bar\alpha_\tau}\bm\epsilon$ with $\bar{\alpha}_\tau\coloneqq\prod_{s=1}^\tau\alpha_s=\prod_{s=1}^\tau(1-\beta_s)$,
and $\bm\epsilon_{\bm\theta}$ is a specific parameterization of the mean satisfying $\bm\mu_{\bm\theta}(\x_\tau,\tau)=\frac{1}{\sqrt{\alpha_\tau}} (\x_\tau - \frac{\beta_\tau}{\sqrt{1-\bar\alpha_\tau}}\bm\epsilon_{\bm\theta}(\x_\tau,\tau) )$.

\textbf{Equivariance.}
\emph{Functions.} A function $f$ is equivariant \emph{w.r.t} a group $G$ if $f(g\cdot\x)=g\cdot f(\x),\forall g\in G$. Furthermore, $f$ is invariant if $f(g\cdot\x)=f(\x),\forall g\in G$~\cite{serre1977linear}. Here we focus on the group  SE$(3)$ consisting of all 3D rotations and translations\footnote{The analyses in this paper also hold for the general $n$D case, \eg, 2D.}. Each group element $g\in\text{SE}(3)$ can be represented by a rotation matrix $\rmR$ and a translation $\rvr\in\sR^3$. For geometrc graph with node features $\h$ and coordinates $\x$, if $\h',\x'=f(\h,\x)$, we expect $\h',\rmR\x'+\rvr=f(\h,\rmR\x+\rvr)$\footnote{Following convention we use the notation $\rmR\rvx$ to denote $\rvx\rmR^\top$.}, \ie, the output node features are invariant while the updated coordinates are equivariant.
\emph{Distributions.} We call a density $p(\x)$ invariant \emph{w.r.t.} a group $G$ if $p(g\cdot\x)=p(\x),\forall g\in G$. 
Intuitively, geometries that are rotationally and translationally equivalent should share the same density, since they all refer to the same 
structure. A conditional distribution $p(\x|\rvy)$ is equivariant if $p(g\cdot\x|g\cdot\rvy)=p(\x|\rvy),\forall g\in G$. 
Such a property is important in cases where the target distribution is conditioned on some given structures: if the observed geometry is rotated/translated, the target distribution should also rotate/translate accordingly.

\textbf{Geometric trajectories and the distributions.} We represent a geometric trajectory as $(\x^{[T]}, \h, \gE)$, where $\x^{[T]}\coloneqq\left[\x^{(0)},\x^{(1)},\cdots,\x^{(T-1)} \right]\in\sR^{T\times N\times D_\x}$ is the sequence of temporal geometric coordinates, $\h\in\sR^{N\times D_\h}$ is the node feature, and $\gE$ is the set of edges representing the connectivity of the geometric graph. $T$ is the number of time steps and $D_\x,D_\h$ refers to the dimension of the coordinate and node feature respectively, with $D_\x$ normally being $2$ or $3$ depending on the input data. In this work, we are interested in modeling the distribution of geometric trajectories given the configuration of the geometric graph, \textit{i.e.}, $p(\x^{[T]}|\h, \gE)$.

\textbf{Conditioning.} Some applications like trajectory forecasting can be viewed as conditional generative tasks, where we seek to model the distribution of trajectories conditioning on certain observed timesteps, \ie, $p(\x^{[T]}|\x_c^{[T_c]}, \h,\gE)$ where $\x_c^{[T_c]}\in\sR^{T_c\times N\times D_\x}$ is the provided trajectory in length $T_c$.

\textbf{Equivariance for geometric trajectories.} Since the dynamics must be invariant to rotation or translation, the distribution of the geometric trajectories should also preserve such symmetry. This is formalized by the following invariance constraint:
\begin{align}
\label{eq:uncond-equiv}
    p(\x^{[T]}\mid\h, \gE)=p(g\cdot\x^{[T]}\mid\h, \gE), \forall g\in\mathrm{SE}(3).
\end{align}
where $g\cdot\x^{[T]}\coloneqq[\rmR\rvx^{(0)}+\rvr,\cdots,\rmR\rvx^{(T-1)}+\rvr]$. 
The conditional case should instead preserve:
\begin{align}
\label{eq:cond-equiv}
    p(\x^{[T]}|\x_c^{[T_c]}, \h,\gE) = p(g\cdot\x^{[T]}| g\cdot\x_c^{[T_c]}, \h,\gE),
\end{align}
for all $g\in\mathrm{SE}(3)$\footnote{Technically, such a condition is impossible since SE$(3)$ is noncompact, but we show that zero-centering the trajectories and enforcing SO$(3)$-invariance is equivalent.}. Intuitively, if the given trajectory is rotated and/or translated, the distribution of the future trajectory should also rotate and/or translate by exactly the same amount. For simplicity, we omit writing the conditions $\h$ and $\gE$ henceforth when describing the distributions of trajectories.

\section{Geometric Trajectory Diffusion Models}

In this section, we introduce the machinery of~\method. We first present Equivariant Geometric Trajectory Network (EGTN) in~\textsection~\ref{sec:egtn}, a general purpose backbone operating on geometric trajectories while ensuring equivariance. We then present~\method in~\textsection~\ref{sec:geotrajdiff} for both unconditional and conditional generation using EGTN as the denoising network. 

\subsection{Equivariant Geometric Trajectory Network}
\label{sec:egtn}
Our proposed Equivariant Geometric Trajectory Network (EGTN) is constructed by stacking equivariant spatial aggregation layers and temporal attention layers in an alternated manner, drawing inspirations from spatio-temporal GNNs~\cite{yu2018spatio,wu2024equivariant}. In particular, spatial layers characterize the structural interactions within the system and temporal layers model the temporal dependencies 
along the trajectory. For spatial aggregation, we employ the Equivariant Graph Convolution Layer (EGCL)~\cite{satorras2021en},
\begin{align}
    \x'^{(t)},\h'^{(t)}= \text{EGCL}(\x^{(t)},\h^{(t)}, \gE), \forall t\in[T].
\end{align}
The equivariant message passing is conducted independently for each frame $t\in[T]\coloneqq\{0,1,\cdots,T-1\}$, with the goal of passing and fusing the geometric information based on the structure of the graph for each time step. Following such layer, we further develop a temporal layer equipped by self-attention, which has exhibited great promise for sequence modeling~\cite{vaswani2017attention}, to capture the temporal correlations encapsulated in the dynamics. We first compute Eqs.~\ref{eq:temp-attn-a}-\ref{eq:temp-attn-h},
\begin{wrapfigure}[6]{r}{.42\textwidth}
\vskip -0.2in
\begin{align}
\label{eq:temp-attn-a}
\rva^{(t,s)}&= \frac{\exp\left(\rvq^{(t)\top}\rvk^{(t,s)}\right)}{\sum_{u\in[T]}\exp\left(\rvq^{(t)\top}\rvk^{(t,u)}\right)},\\
\label{eq:temp-attn-h}
    \h'^{(t)}&=\h^{(t)}+\sum\nolimits_{s\in[T]}\rva^{(t,s)}\rvv^{(t,s)},
\end{align}
\end{wrapfigure}
where $\rvq^{(t)},\rvk^{(t,s)},\rvv^{(t,s)}$ are the query, key, and value, respectively. In detail, $\rvq^{(t)}=\varphi_{\rvq}(\h^{(t)})$, $\rvk^{(t,s)}=\varphi_{\rvk}(\h^{(s)})+\psi(t-s)$, and $\rvv^{(t,s)}=\varphi_{\rvv}(\h^{(s)})+\psi(t-s)$, with $\psi(t-s)$ being the sinusodial encoding~\cite{vaswani2017attention} of the temporal displacement $t-s$, akin to the relative positional encoding~\cite{shaw2018self}. Incorporating such information is crucial since the model is supposed to distinguish different time spans between two frames on the trajectory. Moreover, compared with directly encoding the absolute time step, our design is beneficial in that it ensures the temporal shift invariance of physical processes. The update of coordinates reuse the attention coefficients $\rva^{(t,s)}$ and the values $\rvv^{(t,s)}$,
\begin{align}
        \label{eq:temp-attn-x}
\x'^{(t)}&=\x^{(t)}+\sum\nolimits_{s\in[T]}\rva^{(t,s)}\varphi_{\x}(\rvv^{(t,s)})(\x^{(t)}-\x^{(s)}),
\end{align}
where $\varphi_\x$ is an MLP that outputs a scalar 
to preserve rotation equivariance.
The entire network $f_\mathrm{EGTN}$, with schematic depicted in Fig.~\ref{fig:model_arch}, is constructed by alternating spatial and temporal layers, enjoying equivariance as desired (proof in Appendix~\ref{sec:proof-egtn}):
\begin{theorem}
[$\text{SE}(3)$-equivariance]
\label{theorem:egtn}
 Let $\x'^{[T]},\h'^{[T]}=f_\mathrm{EGTN}\left(\x^{[T]},\h^{[T]},\gE\right)$. Then we have $g\cdot\x'^{[T]},\h'^{[T]}=f_\mathrm{EGTN}\left(g\cdot\x^{[T]},\h^{[T]},\gE\right), \forall g\in\text{SE}(3)$.
\end{theorem}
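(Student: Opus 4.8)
The plan is to prove equivariance by induction on the number of stacked layers, reducing the claim to showing that each individual building block — the per-frame EGCL and the temporal attention layer — is SE$(3)$-equivariant, after which the composition is automatically equivariant. The overall structure mirrors standard arguments for equivariant GNNs, but with the extra bookkeeping that the group acts \emph{simultaneously and identically} on all $T$ frames, i.e. $g\cdot\x^{[T]}=[\rmR\rvx^{(0)}+\rvr,\dots,\rmR\rvx^{(T-1)}+\rvr]$, while the node features $\h^{[T]}$ are held invariant throughout.

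First I would handle the spatial layers. Since the EGCL is applied independently to each frame $t\in[T]$ with a shared graph $\gE$, and since EGCL is known (from Satorras et al.) to satisfy $\rmR\x'^{(t)}+\rvr,\h'^{(t)}=\text{EGCL}(\rmR\x^{(t)}+\rvr,\h^{(t)},\gE)$, applying the same $g$ to every frame shows the stacked spatial layer maps $g\cdot\x^{[T]}$ to $g\cdot\x'^{[T]}$ with invariant features — I would just invoke the single-frame result $T$ times. Second, for the temporal attention layer, the key observation is that the attention machinery (queries, keys, values in Eqs.~\ref{eq:temp-attn-a}–\ref{eq:temp-attn-h}, the coefficients $\rva^{(t,s)}$, and the scalar $\varphi_\x(\rvv^{(t,s)})$) depends only on the node features $\h^{(t)}$ and the temporal displacements $t-s$, never on the coordinates; hence all of these quantities are invariant under $g$. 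Then the coordinate update Eq.~\ref{eq:temp-attn-x} transforms as
\begin{align}
\rmR\x^{(t)}+\rvr+\sum_{s\in[T]}\rva^{(t,s)}\varphi_{\x}(\rvv^{(t,s)})\big((\rmR\x^{(t)}+\rvr)-(\rmR\x^{(s)}+\rvr)\big)
= \rmR\Big(\x^{(t)}+\sum_{s\in[T]}\rva^{(t,s)}\varphi_{\x}(\rvv^{(t,s)})(\x^{(t)}-\x^{(s)})\Big)+\rvr,
\end{align}
because the translations $\rvr$ cancel in the coordinate differences and $\rmR$ factors out by linearity; thus $\x'^{(t)}\mapsto\rmR\x'^{(t)}+\rvr$ and $\h'^{(t)}\mapsto\h'^{(t)}$, establishing equivariance of one temporal block.

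Finally I would close the induction: $f_\mathrm{EGTN}$ is a finite alternating composition of spatial and temporal layers, and the composition of SE$(3)$-equivariant maps (each sending invariant $\h^{[T]}$ to invariant features and $g$-transformed coordinates to $g$-transformed coordinates) is again SE$(3)$-equivariant, which is exactly the claim. I expect the only genuinely delicate point to be the translation invariance of the \emph{coordinate-difference} term in Eq.~\ref{eq:temp-attn-x}: one must check that all coordinate dependence in the temporal layer enters only through differences $\x^{(t)}-\x^{(s)}$ (so that $\rvr$ drops out) and that $\varphi_\x$ is applied to $\rvv^{(t,s)}$, which is a function of features alone — if instead any raw coordinate leaked into the attention weights, translation equivariance would fail. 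The rotation part is then immediate from linearity, and the rest is routine. I would also note in passing the footnote convention $\rmR\rvx := \rvx\rmR^\top$ so that the matrix-on-the-left notation is consistent with the row-vector coordinate layout.
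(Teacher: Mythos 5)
Your proposal is correct and follows essentially the same route as the paper's proof: invoke the known SE$(3)$-equivariance of EGCL frame-by-frame, verify that the attention coefficients, queries, keys, and values are invariant because they depend only on $\h^{[T]}$ and temporal displacements, check the coordinate update in Eq.~\ref{eq:temp-attn-x} transforms correctly since translations cancel in the differences and $\rmR$ factors out, and conclude by closure of SE$(3)$-equivariance under composition. No gaps to flag.
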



\textbf{Geometric conditioning.} In certain tasks like trajectory forecasting, we are additionally provided with some partially observed trajectories as side input. In order to leverage their geometric information, we augment the unconditional EGTN with \emph{equivariant cross-attention}, a conditioning technique tailored for geometric trajectories, and more importantly, guaranteeing the crucial equivariance in Theorem~\ref{theorem:egtn}. 
In principle, our equivariant cross-attention resembles Eqs.~\ref{eq:temp-attn-a}-\ref{eq:temp-attn-x}, but instead computes the attention between the conditioning trajectory $\x_c^{[T_c]}$ and the target $\x^{[T]}$. In detail, the attention coefficients are recomputed as $\rva^{(t,s)}=\frac{\exp\left(\rvq^{(t)\top}\rvk^{(t,s)}\right)}{\sum_{u\in[T_c]\cup[T]}\exp\left(\rvq^{(t)\top}\rvk^{(t,u)}\right)}$. The updated node feature $\h'^{(t)}$ and coordinate $\x'^{(t)}$ in Eqs.~\ref{eq:temp-attn-h}-\ref{eq:temp-attn-x} are further renewed by the cross-attention terms, yielding $\h''^{(t)}=\h'^{(t)}+\sum\nolimits_{s\in[T_c]}\rva^{(t,s)}\rvv^{(t,s)}$ and $\x''^{(t)}=\x'^{(t)}+\sum\nolimits_{s\in[T_c]}\rva^{(t,s)}\varphi_{\x}(\rvv^{(t,s)})(\x^{(t)}-\x_c^{(s)})$.







\subsection{Geometric Trajectory Diffusion Models}
\label{sec:geotrajdiff}

\subsubsection{Unconditional Generation}
For unconditional generation, we seek to model the trajectory distribution subject to
 the SE$(3)$-invariance (Eq.~\ref{eq:uncond-equiv}). To design a diffusion with the reverse marginal conforming to the invariance, we impose certain constraints to the prior and transition kernel, as depicted in the following theorem.
 \begin{theorem}
      If the prior $p_\gT(\x_\gT^{[T]})$ is SE$(3)$-invariant, the transition kernels $p_{\tau-1}({\x}_{\tau-1}^{[T]}\mid{\x}_\tau^{[T]}), \forall\tau\in\{1,\cdots,\gT\}$ are SE$(3)$-equivariant, then the marginal $p_\tau(\x_\tau^{[T]})$ at any step $\tau\in\{0,\cdots,\gT\}$ is also SE$(3)$-invariant.
 \end{theorem}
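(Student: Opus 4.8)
The plan is to prove the statement by induction on the diffusion step, going backwards from $\tau=\gT$ down to $\tau=0$. The base case is immediate: by hypothesis the prior $p_\gT(\x_\gT^{[T]})$ is $\text{SE}(3)$-invariant. For the inductive step, I assume $p_\tau(\x_\tau^{[T]})$ is $\text{SE}(3)$-invariant and show the same for $p_{\tau-1}$. The marginal at step $\tau-1$ is obtained by marginalizing the joint through the transition kernel:
\begin{align}
p_{\tau-1}(\x_{\tau-1}^{[T]}) = \int p_{\tau-1}(\x_{\tau-1}^{[T]}\mid\x_\tau^{[T]})\, p_\tau(\x_\tau^{[T]})\, \dd\x_\tau^{[T]}.
\end{align}

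The core computation is then to evaluate $p_{\tau-1}(g\cdot\x_{\tau-1}^{[T]})$ for arbitrary $g\in\text{SE}(3)$ and show it equals $p_{\tau-1}(\x_{\tau-1}^{[T]})$. Substituting into the integral and using equivariance of the kernel in the form $p_{\tau-1}(g\cdot\x_{\tau-1}^{[T]}\mid g\cdot\x_\tau^{[T]}) = p_{\tau-1}(\x_{\tau-1}^{[T]}\mid\x_\tau^{[T]})$, I would perform the change of variables $\x_\tau^{[T]}\mapsto g\cdot\x_\tau^{[T]}$ inside the integral. Since rotations and translations (elements of $\text{SE}(3)$, acting frame-wise on the coordinates $\x^{[T]}\in\sR^{T\times N\times D_\x}$) have unit Jacobian determinant, the measure is preserved, so the integral becomes $\int p_{\tau-1}(\x_{\tau-1}^{[T]}\mid\x_\tau^{[T]})\, p_\tau(g\cdot\x_\tau^{[T]})\,\dd\x_\tau^{[T]}$; then invoking the inductive hypothesis $p_\tau(g\cdot\x_\tau^{[T]}) = p_\tau(\x_\tau^{[T]})$ recovers exactly $p_{\tau-1}(\x_{\tau-1}^{[T]})$. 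This completes the induction, and in particular $p_0$ is $\text{SE}(3)$-invariant.

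The one genuine subtlety — and the main obstacle — is that $\text{SE}(3)$ is noncompact, so a translation-invariant density on $\sR^{T\times N\times D_\x}$ cannot exist (it would not be normalizable), a point the paper itself flags in a footnote. The clean way to handle this is to work on the quotient space obtained by zero-centering the trajectories (e.g. subtracting the center of mass of the first frame, or of all frames, across nodes), so that the effective symmetry group is the compact $\text{SO}(3)$; on this subspace an invariant prior such as a centered Gaussian does exist, and the change-of-variables argument above goes through verbatim with $g\in\text{SO}(3)$ acting orthogonally (again unit Jacobian). I would state the induction in this reduced formulation and remark that the EGTN equivariance of Theorem~\ref{theorem:egtn}, together with the fact that the $\bm\epsilon_{\bm\theta}$-parameterized mean inherits equivariance and the isotropic Gaussian noise is $\text{SO}(3)$-invariant, is precisely what makes each reverse kernel $p_{\tau-1}(\x_{\tau-1}^{[T]}\mid\x_\tau^{[T]}) = \gN(\x_{\tau-1}^{[T]};\bm\mu_{\bm\theta}(\x_\tau^{[T]},\tau),\sigma_\tau^2\rmI)$ equivariant, closing the loop between the network-level result and the distribution-level claim. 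The remaining details (measurability, that the Gaussian kernel indeed satisfies the equivariance identity under orthogonal $g$) are routine and I would relegate them to a line or two.
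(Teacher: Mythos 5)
Your proof is correct and follows essentially the same route as the paper's: a backward induction from $\tau=\gT$ using the transition-kernel equivariance and a unit-Jacobian change of variables, with the translation noncompactness handled exactly as the paper does, by zero-centering onto the quotient subspace and working with $\text{SO}(3)$-invariance there.
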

 
 \textbf{Prior in the translation-invariant subspace.} Unfortunately, there is no properly normalized distribution \emph{w.r.t.} Lebesgue measure on the ambient space $\gX\coloneqq\sR^{T\times N\times D}$ that permits translation-invariance~\cite{satorras2021en}. 
 We instead build the prior on a translation-invariant subspace $\gX_\rmP\subset\gX$ induced by a linear transformation $\rmP\in\gX\times\gX$ with $\mathrm{rank}(\rmP)=(TN-1)D$~\cite{rao1973linear}. Specifically, we choose the prior to be the projection of the Gaussian $\gN(\bm{0},\rmI)$ in $\gX$ to $\gX_\rmP$ by
$\rmP=\rmI_D\otimes\left(\rmI_{TN}-\frac{1}{TN}\bm{1}_{TN}\bm{1}_{TN}^\top\right)$,
which corresponds to the function $P(\x^{[T]})=\x^{[T]}-\frac{1}{T}\sum_{t=0}^{T-1}\mathrm{CoM}(\x^{(t)})$, with $\mathrm{CoM}(\x^{(t)})=\frac{1}{N}\sum_{i=1}^N\x^{(t)}_i$ being the center-of-mass (CoM) of the system at time $t$. We denote $\tilde\x\coloneqq P(\x)$. Then the resulting distribution is a restricted Gaussian (denoted $\tilde{\gN}(\bm{0},\rmI))$ with the variables supported only on the subspace (see App.~\ref{sec:uncond-detailed-proof}), and more importantly, is still isotropic and thus SO$(3)$-invariant. To sample from the prior, one can alternatively sample $\x^{[T]}\sim\gN(\bm{0},\rmI)\in\gX$ and then project it to the subspace to obtain the final sample $\tilde\x^{[T]}=P(\x^{[T]})\in\gX_\rmP$.

\textbf{Transition kernel.} To be consistent with the prior, we also parameterize the transition kernel in the subspace $\gX_\rmP$, given by $p_{\bm\theta}(\tilde{\x}_{\tau-1}^{[T]}\mid\tilde{\x}_\tau^{[T]}) = \tilde\gN(\tilde{\bm\mu}_{\bm\theta}(\tilde\x_\tau^{[T]},\tau), \sigma_\tau^2\rmI)$. In this way, if the mean function $\tilde{\bm\mu}_{\bm\theta}(\cdot)$ is SO$(3)$-equivariant, then the transition kernel is also guaranteed SO$(3)$-equivariance. As suggested by~\cite{ho2020denoising}, we re-parameterize $\tilde{\bm\mu}_{\bm\theta}(\tilde\x_\tau^{[T]},\tau)=\frac{1}{\sqrt{\alpha_\tau}}\left(\tilde\x_\tau^{[T]}-\frac{\beta_\tau}{\sqrt{1-\bar\alpha_\tau}}\tilde{\bm\epsilon}_{\bm\theta}(\x_\tau^{[T]},\tau)\right)$, where $\tilde{\bm\epsilon}_{\bm\theta}=P\circ\rvf_{\bm\theta}$, with $\rvf_{\bm\theta}$ being an SO$(3)$-equivariant adaptation of our proposed EGTN, fulfilled by subtracting the input coordinates from the output for translation invariance. The diffusion step $\tau$ is transformed via time embedding and concatenated to the invariant node features $\h^{[T]}$ in the input.

\textbf{Training and inference.} We optimize the VLB for training, which, interestingly, still has a surrogate in the noise-prediction form when specifying the factors $\lambda(\tau)$ as 1 (proof in App.~\ref{sec:uncond-detailed-proof}):
\begin{align}
\gL_{\mathrm{uncond}}\coloneqq\E_{\x_0^{[T]},\tilde{\bm\epsilon}\sim\tilde\gN(\bm{0},\rmI),\tau\sim\mathrm{Unif}(1,\gT)}\left[\|\tilde{\bm\epsilon}-\tilde{\bm\epsilon}_{\bm\theta}(\tilde\x_\tau^{[T]},\tau)   \|^2 \right].
\end{align}
The inference process is similar to~\cite{ho2020denoising} but with additional applications of $P$ in intermediate steps to keep all samples in the subspace $\gX_\rmP$. Details are in Alg.~\ref{alg:train-uncond} and~\ref{alg:sampling-uncond}.

\subsubsection{Conditional Generation}

Distinct from the unconditional generation, in the conditional scenario the target distribution should instead be SE$(3)$-equivariant \emph{w.r.t.} the given frames, as elucidated in Eq.~\ref{eq:cond-equiv}. The following theorem describes the constraints to consider when designing the prior and transition kernel.

\begin{theorem}
    If the prior $p_\gT(\x_{\gT}^{[T]}|\x_c^{[T_c]})$ is SE$(3)$-equivariant, the transition kernels $p_{\tau-1}(\x_{\tau-1}^{[T]}| \x_{\tau}^{[T]},\x_c^{[T_c]})$, $\forall\tau\in\{1,\cdots,\gT\}$ are SE$(3)$-equivariant, the marginal\footnote{Marginal refers to marginalizing the intermediate states in reverse process while still conditioning on $\x_c^{[T_c]}$.} $p_\tau(\x_{\tau}^{[T]}|\x_c^{[T_c]})$, $\forall\tau\in\{0,\cdots,\gT\}$ is SE$(3)$-equivariant.
\end{theorem}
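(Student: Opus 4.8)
The plan is to mirror the argument for the unconditional case (the previous theorem), carrying the conditioning variable $\x_c^{[T_c]}$ through as a passive parameter. I would prove the statement by induction on the reverse diffusion step, starting from $\tau=\gT$ and descending to $\tau=0$. The base case is exactly the hypothesis that the prior $p_\gT(\x_\gT^{[T]}\mid\x_c^{[T_c]})$ is $\mathrm{SE}(3)$-equivariant w.r.t.\ $\x_c^{[T_c]}$, i.e.\ $p_\gT(g\cdot\x_\gT^{[T]}\mid g\cdot\x_c^{[T_c]})=p_\gT(\x_\gT^{[T]}\mid\x_c^{[T_c]})$ for all $g\in\mathrm{SE}(3)$.

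For the inductive step, suppose $p_\tau(\x_\tau^{[T]}\mid\x_c^{[T_c]})$ is equivariant. The marginal at step $\tau-1$ is obtained by the Chapman--Kolmogorov-type identity $p_{\tau-1}(\x_{\tau-1}^{[T]}\mid\x_c^{[T_c]})=\int p_{\tau-1}(\x_{\tau-1}^{[T]}\mid\x_\tau^{[T]},\x_c^{[T_c]})\,p_\tau(\x_\tau^{[T]}\mid\x_c^{[T_c]})\,\dd\x_\tau^{[T]}$. I would then replace $\x_{\tau-1}^{[T]},\x_c^{[T_c]}$ by $g\cdot\x_{\tau-1}^{[T]},g\cdot\x_c^{[T_c]}$, substitute the change of variables $\x_\tau^{[T]}\mapsto g\cdot\x_\tau^{[T]}$ in the integral (whose Jacobian determinant is $1$ since $g$ acts by a rotation plus translation), and invoke the equivariance of the transition kernel on the first factor and the inductive hypothesis on the second. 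This collapses the transformed integral back to $p_{\tau-1}(\x_{\tau-1}^{[T]}\mid\x_c^{[T_c]})$, completing the induction and in particular giving the claim at $\tau=0$.

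The main technical wrinkle — rather than a genuine obstacle — is the footnote's caveat that $\mathrm{SE}(3)$ is noncompact, so strict $\mathrm{SE}(3)$-equivariance of a properly normalized density on the ambient space is impossible; the statement must be read (as in the unconditional case and App.~\ref{sec:uncond-detailed-proof}) as $\mathrm{SO}(3)$-equivariance together with the translation being pinned down by zero-centering relative to the conditioning frames. So I would carry out the induction above on the translation-reduced subspace: decompose $g=(\rmR,\rvr)$, absorb $\rvr$ by the zero-centering map $P$ applied jointly to $(\x^{[T]},\x_c^{[T_c]})$, and run the Jacobian/change-of-variables argument with $\rmR\in\mathrm{SO}(3)$ only, where the pushforward of Lebesgue measure on the subspace is genuinely preserved. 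The rest is the same bookkeeping as the unconditional proof, now with $\x_c^{[T_c]}$ along for the ride.
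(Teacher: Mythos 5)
Your proposal is correct and follows essentially the same route as the paper: a backward induction from $\tau=\gT$ with the prior's equivariance as the base case, and an inductive step that combines the Chapman--Kolmogorov factorization, the equivariance of the transition kernel, the inductive hypothesis, and the unit-Jacobian change of variables $\x_\tau^{[T]}\mapsto g\cdot\x_\tau^{[T]}$. The only superfluous element is your final caveat about reducing to the zero-centered subspace: in the conditional case the paper works directly in the ambient space with the full action of $\mathrm{SE}(3)$ (rotation $\rmR$ and translation $\rvr$ handled together in the substitution), since translation-\emph{equivariant} conditional densities such as $\gN(\x_r^{[T]}(\x_c^{[T_c]}),\rmI)$ are properly normalized, so no quotienting by translations is needed here.
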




\textbf{Flexible equivariant prior.} There are in general many valid choices for the prior while satisfying SE$(3)$-equivariance. We provide a guidance on distinguishing feasible designs when using Gaussian-based prior in the proposition below.
\begin{proposition}
\label{theorem:mean-equivariant-distribution-equivariant}
   $\gN(\bm\mu(\x_c^{[T_c]}), \rmI)$ is SE$(3)$-equivariant \emph{w.r.t.} $\x_c^{[T_c]}$ if $\bm\mu(\x_c^{[T_c]})$ is SE$(3)$-equivariant. 
\end{proposition}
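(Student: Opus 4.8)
The plan is to verify the defining identity $p(g\cdot\x^{[T]}\mid g\cdot\x_c^{[T_c]}) = p(\x^{[T]}\mid\x_c^{[T_c]})$ directly for the Gaussian density $p(\x^{[T]}\mid\x_c^{[T_c]}) = \gN(\x^{[T]};\bm\mu(\x_c^{[T_c]}),\rmI)$, using the assumed equivariance of $\bm\mu$. Concretely, writing $g\in\mathrm{SE}(3)$ as a rotation $\rmR$ together with a translation $\rvr$, and recalling that $g$ acts on a trajectory frame-wise by $\rvx^{(t)}\mapsto\rmR\rvx^{(t)}+\rvr$, the key observation is that the quadratic form in the Gaussian exponent is built out of differences $\x^{(t)}-\bm\mu(\x_c^{[T_c]})^{(t)}$, and translations cancel in such differences while rotations act as an orthogonal change of variables.

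First I would write out $p(g\cdot\x^{[T]}\mid g\cdot\x_c^{[T_c]})$ explicitly: its mean is $\bm\mu(g\cdot\x_c^{[T_c]})$, which by the assumed SE$(3)$-equivariance of $\bm\mu$ equals $g\cdot\bm\mu(\x_c^{[T_c]})$, i.e. the trajectory with frames $\rmR\,\bm\mu(\x_c^{[T_c]})^{(t)}+\rvr$. Then the exponent is $-\tfrac12\sum_{t}\|(\rmR\x^{(t)}+\rvr) - (\rmR\bm\mu^{(t)}+\rvr)\|^2$ where $\bm\mu^{(t)}\coloneqq\bm\mu(\x_c^{[T_c]})^{(t)}$. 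The translation $\rvr$ cancels inside each norm, leaving $-\tfrac12\sum_{t}\|\rmR(\x^{(t)}-\bm\mu^{(t)})\|^2$, and since $\rmR$ is orthogonal this equals $-\tfrac12\sum_{t}\|\x^{(t)}-\bm\mu^{(t)}\|^2$, which is exactly the exponent of $p(\x^{[T]}\mid\x_c^{[T_c]})$. The normalizing constant of an identity-covariance Gaussian is a fixed power of $2\pi$, independent of the mean, so it is unchanged. Hence the two densities agree, which is the claim.

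The one subtlety worth addressing — and the closest thing to an obstacle — is that $\mathrm{SE}(3)$ is noncompact, so there is no normalized translation-invariant density on the ambient space; this is exactly the caveat flagged in the footnote after Eq.~\ref{eq:cond-equiv}. The resolution, consistent with the unconditional treatment, is that the relevant statement is really about the zero-centered trajectories living in the translation-invariant subspace $\gX_\rmP$ together with SO$(3)$-equivariance, and one should either (i) interpret $\gN(\bm\mu(\x_c^{[T_c]}),\rmI)$ as the restricted Gaussian $\tilde\gN$ on $\gX_\rmP$ shifted by the (centered) equivariant mean, in which case the computation above goes through verbatim with $\rmR$ acting on the subspace and $\rvr$ acting trivially after centering, or (ii) phrase the proposition as equivariance of the density on $\gX_\rmP$. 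Either way the algebra is the same two-line cancellation; the only real content is invoking the equivariance hypothesis on $\bm\mu$ and the orthogonality of $\rmR$. I would present option (i) briefly so the statement as written (with full $\mathrm{SE}(3)$) is justified, then carry out the displayed cancellation.
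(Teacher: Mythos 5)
Your argument is correct and is essentially the paper's own proof: the paper verifies the identity pointwise by substituting the equivariant mean, cancelling the translation inside the Gaussian exponent, and using orthogonality of $\rmR$ (with the normalizing constant unaffected), exactly as you do. Your added caveat about noncompactness is harmless but not needed here, since for a \emph{conditional} Gaussian whose mean shifts with the condition the translation-equivariance identity poses no normalization problem — that obstruction only bites for translation-\emph{invariant} unconditional densities.
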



Proof is in App.~\ref{sec:cond-detailed-proof}. Notably, the mean of the prior $\x_r^{[T]}\coloneqq\bm\mu(\x_c^{[T_c]})$ naturally serves as an anchor to transit the geometric information in the provided trajectory to the target distribution we seek to model. For instance, one can choose it as a linear combination of the CoMs of the given frames, \ie, $\x_r^{[T]}=\bm{1}_{T\times N}\otimes\sum_{s\in[T_c]}w^{(s)}\bar\x_c^{(s)}$, where $\sum_{s\in[T_c]}w^{(s)}=1$ are fixed parameters determined \emph{a priori}~\cite{hoogeboom2022equivariant,guan2023d}.
However, this choice does not leverage temporal consistency of the trajectory and incurs extra effort in optimization, since the model needs to learn to reconstruct the complex structures from points all located at the CoM. 
In contrast, we propose the following instantiation:
\begin{align}
\label{eq:anchor_frame_translation_constraint}
    \x_r^{(t)}= \sum_{s\in[T_c]}\rvw^{(t,s)}\hat\x_c^{(s)},\quad\text{s.t.} \sum_{s\in[T_c]}\rvw^{(t,s)}=\bm{1},
\end{align}
for all $t\in[T]$, where each $\x_r^{(t)}$ is a point-wise linear combination of $\hat\x_c^{(s)}$, an SE$(3)$-equivariant transformation of the conditioning frames, with $\rvw^{(t,s)}\in\sR^N$ being the weights. 
We first obtain $\hat\x_c^{[T_c]},\hat\h_c^{[T_c]}=\rvf_\eta(\x_c^{[T_c]},\h_c^{[T_c]})$ where $\rvf_\eta$ is a lightweight two layer EGTN that aims to synthesize the conditional information.
The $\rvw^{(t,s)}$ is then derived as,
\begin{align}
\label{eq:all_W}
    \rmW_{t,s}&=[ \bm\gamma \otimes \hat\h_c^{[T_c]}  ]_{t,s}\in\sR^N,\\
    \label{eq:each_W}
    \rvw^{(t,s)}&=\begin{cases}
    \rmW_{t,s} & \text{$s < T_c-1$,} \\
    \bm{1}_N-\sum_{s=0}^{T_c-2}\rmW_{t,s} & \text{$s=T_c-1$.}
  \end{cases}
\end{align}
Here $\bm\gamma\in\sR^{T}$ are learnable parameters, and $\rvw^{(t,s)}$ is parameterized such that it has a sum of $\bm{1}_N$ when $s$ goes through $[T_c]$ to satisfy the constraint in Eq.~\ref{eq:anchor_frame_translation_constraint} for translation equivariance.
Interestingly, as we formally illustrated in Theorem~\ref{theorem:prior_subsumes}, our parameterization of the prior theoretically \emph{subsumes} the CoM-based priors~\cite{hoogeboom2022equivariant,guan2023d} and the fixed point-wise priors when $\bm\gamma$, $\hat\h_c^{[T_c]}$, and $\hat\x_c^{[T_c]}$ reduce to specific values. Such theoretical result underscores the benefit of our design since it permits the model to dynamically update the prior, leading to better optimization.
The parameters $\bm\eta$ and $\bm\gamma$ are updated during training with gradients coming from optimizing the variational lower bound.



\textbf{Transition kernel.} We need to modify the forward and reverse process such that they both match the proposed prior. The forward process is modified as $q(\x_\tau^{[T]}|\x_{\tau-1}^{[T]},\x_c^{[T_c]})\coloneqq\gN(\x_\tau^{[T]}; \x_r+\sqrt{1-\beta_\tau}(\x^{[T]}_{\tau-1}-\x_r), \beta_\tau\rmI)$, which ensures $q(\x_\gT^{[T]}|\x_c^{[T_c]})$ matches the equivariant prior $\x_r$ (proof in App.~\ref{sec:cond-detailed-proof}). The reverse transition kernel is given by $p_{\tau-1}(\x_{\tau-1}^{[T]}| \x_{\tau}^{[T]},\x_c^{[T_c]})=\gN(\bm\mu_{\bm\theta}(\x_\tau^{[T]},\tau,\x_c^{[T_c]}),\sigma^2_\tau\rmI)$. Similar to the unconditional case, we also adopt the noise prediction objective by rewriting $\bm\mu_{\bm\theta}(\x_{\tau}^{[T]},\x_c^{[T_c]},\tau)=\x_r^{[T]}+\frac{1}{\sqrt{\alpha_\tau}}\left( \x_\tau^{[T]} -\x_r^{[T]}  -\frac{\beta_\tau}{\sqrt{1-\bar\alpha_\tau}} \bm\epsilon_{\bm\theta} (\x_{\tau}^{[T]},\x_c^{[T_c]},\tau) \right)$. The denoising network $\bm\epsilon_{\bm\theta}$ is implemented as an EGTN but with its output subtracted by the input for translation invariance, hence the translation equivariance of $\bm\mu_{\bm\theta}$.

\textbf{Training and inference.} Optimizing the VLB of our diffusion yields the following objective:
\begin{align}
\gL_{\mathrm{cond}}\coloneqq\E_{\x_0^{[T]},\x_c^{[T_c]},\bm\epsilon\sim\gN(\bm{0},\rmI),\tau\sim\mathrm{Unif}(1,\gT)}\left[\|\bm\epsilon-\bm\epsilon_{\bm\theta}(\x_\tau^{[T]},\x_c^{[T_c]},\tau)   \|^2 \right],
\end{align}
after simplification (proof in App.~\ref{sec:cond-detailed-proof}). The training and inference procedures are in Alg.~\ref{alg:train} and~\ref{alg:sampling}.

\section{Experiments}

We evaluate~\method on N-body physical simulation, molecular dynamics, and pedestrian trajectory forecasting, in both conditional (\textsection~\ref{sec:cond_exp}) and unconditional generation (\textsection~\ref{sec:uncond_exp}) scenarios. We ablate our core design choices and demonstrate additional use cases in~\textsection~\ref{sec:ablations}.

\subsection{Conditional Case}
\label{sec:cond_exp}

\subsubsection{N-body}

\textbf{Experimental setup.} We adopt three scenarios in the collection of N-body simulation datasets, including \textbf{1.} Charged Particles~\cite{kipf2018neural,satorras2021en}, where $N=5$ particles with charges randomly chosen between $+1/-1$ are moving under Coulomb force; \textbf{2.} Spring Dynamics~\cite{kipf2018neural}, where $N=5$ particles with random mass are connected by springs with a probability of 0.5 between each pairs, and force on the spring follows Hooke's law; \textbf{3.} Gravity System~\cite{brandstetter2022geometric}, where $N=10$ particles with random mass and initial velocity moves driven by gravitational force. For all three datasets, we use 3000 trajectories for training, 2000 for validation, and 2000 for testing. For each trajectory, we use 10 frames as the condition and predict the trajectory for the next 20 frames. 

\textbf{Baselines.} We involve baselines from three families. Frame-to-frame prediction models: Radial Field~\cite{kohler2019equivariant}, Tensor Field Network~\cite{thomas2018tensor}, SE$(3)$-Transformer~\cite{fuchs2020se}, and EGNN~\cite{satorras2021en}; Deterministic trajectory model: 
EqMotion~\cite{xu2023eqmotion}; Probabilistic trajectory model: SVAE~\cite{xu2022socialvae}. Details in App.~\ref{sec:baselines}.

\textbf{Metrics.} We employ Average Discrepancy Error (ADE) and Final Discrepancy Error (FDE), which are widely adopted for trajectory forecasting~\cite{xu2022socialvae,xu2023eqmotion}, given by $\text{ADE}(\x^{[T]},\rvy^{[T]})=\frac{1}{TN}\sum_{t=0}^{T-1}\sum_{i=0}^{N-1}\|\x_i^{(t)} - \rvy_i^{(t)} \|_2$, and $\text{FDE}(\x^{[T]},\rvy^{[T]})=\frac{1}{N}\sum_{i=0}^{N-1}\|\x_i^{(T-1)}-\rvy_i^{(T-1)}\|_2$. For probabilistic models, we report average ADE and FDE derived from $K=5$ samples.

\textbf{Implementation.} The input data are processed as geometric graphs. For example, on Charged Particles, the node feature is the charge, and the graph is specified as fully connected without self-loops. We use 6 layers in EGTN with hidden dimension of 128. We use $\gT=1000$ and the linear noise schedule~\cite{ho2020denoising}. More details in App.~\ref{sec:hyper-params}.

\begin{table}[t!]

  \begin{minipage}[!t]{.45\textwidth}
  \centering
  \small
  \caption{Conditional generation on N-body. Results averaged over 5 runs, std in App.~\ref{sec:std}.}
   \setlength{\tabcolsep}{4pt}
       \resizebox{1.0\linewidth}{!}{
    \begin{tabular}{lcccccc}
    \toprule
          & \multicolumn{2}{c}{Particle} & \multicolumn{2}{c}{Spring} & \multicolumn{2}{c}{Gravity} \\
           \cmidrule(lr){2-3} \cmidrule(lr){4-5} \cmidrule(lr){6-7}
          & ADE   & FDE   & ADE   & FDE   & ADE   & FDE \\
    \midrule
    RF~\cite{kohler2019equivariant}    &  0.479     &  1.050     &   0.0145    & 0.0389      &  0.791     & 1.630 \\
   TFN ~\cite{thomas2018tensor}   &   0.330    &  0.754     &  0.1013     &  0.2364     &   0.327    & 0.761 \\
    SE$(3)$-Tr~\cite{fuchs2020se}    &  0.395     &  0.936     &  0.0865     &  0.2043     &  0.338     & 0.830 \\
     EGNN~\cite{satorras2021en}    &  0.186     &  0.426     &   \underline{0.0101}    &  0.0231     &  0.310     &  0.709 \\
     \midrule
    EqMotion~\cite{xu2023eqmotion} &  \underline{0.141}     &  \underline{0.310}     &    0.0134   &  0.0358     &  \underline{0.302}    & \underline{0.671} \\
      SVAE~\cite{xu2022socialvae}    &   0.378    &   0.732    &   0.0120    &  \underline{0.0209}     &  0.582    &  1.101 \\
      \midrule
    GeoTDM &   \textbf{0.110}    &  \textbf{0.258}     &  \textbf{0.0030}     &  \textbf{0.0079}     &   \textbf{0.256}    & \textbf{0.613} \\
    \bottomrule
    \end{tabular}%
    }
  \label{tab:cond_N-body}%
  \end{minipage}
  \ \
  \begin{minipage}[!t]{.54\textwidth}
        \setlength{\tabcolsep}{2.5pt}
  \caption{Pedestrian trajectory forecasting on ETH-UCY. Best in \textbf{bold} and second best \underline{underlined}.}
  \centering
  \small
      \resizebox{1.0\linewidth}{!}{
    \begin{tabular}{lcccccc}
    \toprule
          & \multicolumn{1}{c}{ETH} & \multicolumn{1}{c}{Hotel} & \multicolumn{1}{c}{Univ} & \multicolumn{1}{c}{Zara1} & \multicolumn{1}{c}{Zara2} & \multicolumn{1}{c}{Average} \\
    \midrule
    Linear & 1.07/2.28    &  0.31/0.61      &   0.52/1.16    &  0.42/0.95    &   0.32/0.72     & 0.53/1.14  \\
    SGAN~\cite{gupta2018social}  & 0.64/1.09      &   0.46/0.98    &   0.56/1.18    &  0.33/0.67     &  0.31/0.64     & 0.46/0.91 \\
    SoPhie~\cite{sadeghian2019sophie} & 0.70/1.43      &   0.76/1.67 &   0.54/1.24    &  0.30/0.63     &  0.38/0.78     & 0.54/1.15 \\
 PECNet~\cite{mangalam2020not} & 0.54/0.87     &  0.18/0.24 &   0.35/0.60   &  0.22/0.39     &  0.17/\underline{0.30}    & 0.29/0.48 \\
Traj++~\cite{salzmann2020trajectron++} & 0.54/0.94      &   0.16/0.28 &   0.28/0.55   &  \underline{0.21}/0.42    &  \underline{0.16}/0.32    &  0.27/0.50\\
   BiTraP~\cite{yao2021bitrap} & 0.56/0.98      &   0.17/0.28 &   \underline{0.25}/0.47  &  0.23/0.45     &  \underline{0.16}/0.33   & 0.27/0.50 \\
   MID~\cite{gu2022stochastic} & 0.50/\underline{0.76}      &   0.16/0.24 &   0.28/0.49  &  0.25/0.41     &  0.19/0.35   & 0.27/0.45 \\
    SVAE~\cite{xu2022socialvae} &   \underline{0.47}/\underline{0.76}    &   \underline{0.14}/\underline{0.22}    &  \underline{0.25}/\underline{0.47}     &   \textbf{0.20}/\textbf{0.37}    &  \textbf{0.14}/\textbf{0.28}     &  \underline{0.24}/\underline{0.42} \\
    \midrule
        GeoTDM &  \textbf{0.46}/\textbf{0.64}     &   \textbf{0.13}/\textbf{0.21}    &  \textbf{0.24}/\textbf{0.45}     &  0.21/\underline{0.39}     &  \underline{0.16}/\underline{0.30}     &  \textbf{0.23}/\textbf{0.40}\\
    \bottomrule
    \end{tabular}%
    }
    \label{tab:ped_traj}
  \end{minipage}
\end{table}%

\textbf{Results.} We present the results in Table~\ref{tab:cond_N-body}, with the following observations. \textbf{1.} Trajectory models generally yield lower error than frame to frame prediction models since they mitigate errors accumulated in iterative roll-out. \textbf{2.} The equivariant methods, \eg, EGNN, EqMotion, and our~\method significantly improves over the non-equivariant model SVAE, demonstrating the importance of injecting physical symmetry into the modeling of geometric trajectories. \textbf{3.} By directly modeling the distribution of geometric trajectories with equivariance,~\method achieves the lowest ADE and FDE on all three tasks, showcasing the superiority of the proposed approach.

\subsubsection{Molecular Dynamics}

\begin{table*}[t!]
\vskip -0.1in
  \centering
  \small
    \setlength{\tabcolsep}{3pt}
  \caption{Conditional trajectory generation on MD17. Results averaged over 5 runs (std in App.~\ref{sec:std}).
}
    \resizebox{1.0\linewidth}{!}{
    \begin{tabular}{lcccccccccccccccc}
    \toprule
          & \multicolumn{2}{c}{Aspirin} & \multicolumn{2}{c}{Benzene} & \multicolumn{2}{c}{Ethanol} & \multicolumn{2}{c}{Malonaldehyde} & \multicolumn{2}{c}{Naphthalene} & \multicolumn{2}{c}{Salicylic} & \multicolumn{2}{c}{Toluene} & \multicolumn{2}{c}{Uracil} \\
     \cmidrule(lr){2-3}\cmidrule(lr){4-5}\cmidrule(lr){6-7}\cmidrule(lr){8-9}\cmidrule(lr){10-11}\cmidrule(lr){12-13}\cmidrule(lr){14-15}\cmidrule(lr){16-17}
          & \multicolumn{1}{c}{ADE} & \multicolumn{1}{c}{FDE} & \multicolumn{1}{c}{ADE} & \multicolumn{1}{c}{FDE} & \multicolumn{1}{c}{ADE} & \multicolumn{1}{c}{FDE} & \multicolumn{1}{c}{ADE} & \multicolumn{1}{c}{FDE} & \multicolumn{1}{c}{ADE} & \multicolumn{1}{c}{FDE} & \multicolumn{1}{c}{ADE} & \multicolumn{1}{c}{FDE} & \multicolumn{1}{c}{ADE} & \multicolumn{1}{c}{FDE} & \multicolumn{1}{c}{ADE} & \multicolumn{1}{c}{FDE} \\
    \midrule
    RF~\cite{kohler2019equivariant}    &  0.303    &  0.442     &  0.120     &  0.194     &  0.374     &   0.515    &  0.297     & 0.454      &  0.168    & 0.185      & 0.261     &  0.343     &  0.199     &  0.249     & 0.239      & 0.272 \\
    TFN~\cite{thomas2018tensor}   &   \underline{0.133}    &  0.268     &  \underline{0.024}     &  0.049     &  0.201    &   0.414    &  0.184   &  0.386     & 0.072     & 0.098      &  0.115    &  0.223     &   \underline{0.090}   & 0.150     &   0.090    & 0.159 \\
    SE(3)-Tr.~\cite{fuchs2020se} &   0.294   &  0.556    &   0.027    &  0.056     &  0.188     &  0.359     &  0.214    &  0.456     &  \underline{0.069}    &  0.103    &   0.189   &  0.312     &   0.108    &  0.184     &  0.107     & 0.196 \\
    EGNN~\cite{satorras2021en}  &  0.267   & 0.564     &  \underline{0.024}    & \underline{0.042} &  0.268    & 0.401  &  0.393   &  0.958     &   0.095    &  0.133     &  0.159     &  0.348     &  0.207     &  0.294     & 0.154      & 0.282 \\
    \midrule
    EqMotion~\cite{xu2023eqmotion}  &  0.185   & \underline{0.246}     &  0.029    & 0.043 &  \underline{0.152}   & \underline{0.247}   &  \underline{0.155}   &  \underline{0.249}     &   0.073    &  \underline{0.092}     &  \underline{0.110}     &  \underline{0.151}    &  0.097    &  \underline{0.129}    & \underline{0.088}     & \underline{0.116} \\
    SVAE~\cite{xu2022socialvae}  & 0.301    &  0.428    &   0.114    &   0.133    &  0.387     &  0.505    &  0.287     &  0.430     & 0.124      &  0.135     &  0.122    &  0.142     &  0.145     &  0.171     &    0.145   & 0.156 \\
    \midrule
    GeoTDM &   \textbf{0.107}   &  \textbf{0.193}     &   \textbf{0.023}    &   \textbf{0.039}    &  \textbf{0.115}    & \textbf{0.209}      &   \textbf{0.107}   &  \textbf{0.176}    &  \textbf{0.064}     &   \textbf{0.087}    &  \textbf{0.083}     &   \textbf{0.120}   &  \textbf{0.083}    &  \textbf{0.121}    &    \textbf{0.074}   & \textbf{0.099} \\
    \bottomrule
    \end{tabular}%
    }
  \label{tab:cond_md17}%
  \vskip -0.1in
\end{table*}%

\textbf{Experimental setup.} We employ the MD17~\cite{chmiela2017machine} dataset, which contains the DFT-simulated molecular dynamics (MD) trajectories of 8 small molecules, with the number of atoms for each molecule ranging from 9 (Ethanol and Malonaldehyde) to 21 (Aspirin). For each molecule, we construct a training set of 5000 trajectories, and 1000/1000 for validation and testing, uniformly sampled along the time dimension. Different from~\cite{xu2023eqmotion}, we explicitly involve the hydrogen atoms which contribute most to the vibrations of the trajectory, leading to a more challenging task. The node feature is the one-hot encodings of atomic number~\cite{schutt2021equivariant} and edges are connected between atoms within three hops measured in atomic bonds~\cite{shi2021learning}. We adopt the same set of baselines as the N-body experiments.

\textbf{Results.} As depicted in Table~\ref{tab:cond_md17},~\method achieves the best performance on all eight molecule MD trajectories, outperforming previos state-of-the-art approach EqMotion. In particular,~\method obtains an improvement of 23.1\%/15.3\% on average in terms of ADE/FDE, compared with the previous state-of-the-art approach EqMotion, thanks to the probabilistic modeling which is advantageous in capturing the stochasticity of MD simulations.

\subsubsection{Pedestrian Trajectory Forecasting}

\textbf{Experimental setup.} We apply our model to ETH-UCY~\cite{pellegrini2009you,lerner2007crowds} dataset, a challenging and large-scale benchmark for pedestrian trajectory forecasting. There are five scenes in total: ETH, Hotel, Univ, Zara1, and Zara2. Following standard setup~\cite{gupta2018social,xu2022socialvae}, we use 8 frames (3.2 seconds) as input to predict the next 12 frames (4.8 seconds). The pedestrians are viewed as nodes and their 2D coordinates are extracted from the scenes. Edges are connected for nodes within a preset distance measured from the final frame in the given trajectory. The metrics are minADE/minFDE computed from 20 samples. 
For baselines, we compare with existing generative models that have been specifically designed for pedestrian trajectory prediction, including GANs: SGAN~\cite{gupta2018social}, SoPhie~\cite{sadeghian2019sophie};
VAEs: PECNet~\cite{mangalam2020not}, Traj++~\cite{salzmann2020trajectron++}, BiTraP~\cite{yao2021bitrap}, SVAE~\cite{xu2022socialvae}; and diffusion: MID~\cite{gu2022stochastic}. Baseline results are taken from~\cite{xu2022socialvae}.

\textbf{Results.} From Table~\ref{tab:ped_traj}, we observe that our~\method obtains the best predictions on 3 out of the 5 scenarios while achieving the lowest average ADE and FDE. It is remarkable since compared with these baselines specifically tailored for the task of pedestrian trajectory forecasting,~\method does not involve special data preprocessing of the trajectories through rotations or translations, does not involve extra auxiliary losses to optimize during training, and does not require task-specific backbones, demonstrating its general effectiveness across different geometric domains.

\begin{table*}[t!]
  \centering
  \small
  \setlength{\tabcolsep}{3pt}
  \caption{MD Trajectory generation results on MD17. Marg, Class, and Pred refer to Marginal score, Classification score, and Prediction score respectively.~\method performs the best on all 8 molecules.}
    \resizebox{1.0\linewidth}{!}{
    \begin{tabular}{lcccccccccccc}
    \toprule
          & \multicolumn{3}{c}{Aspirin} & \multicolumn{3}{c}{Benzene} & \multicolumn{3}{c}{Ethanol} & \multicolumn{3}{c}{Malonaldehyde} \\
          \cmidrule(lr){2-4}\cmidrule(lr){5-7}\cmidrule(lr){8-10}\cmidrule(lr){11-13}
          & \multicolumn{1}{c}{Marg~$\downarrow$} & \multicolumn{1}{c}{Class~$\uparrow$} & \multicolumn{1}{c}{Pred~$\downarrow$} & \multicolumn{1}{c}{Marg~$\downarrow$} & \multicolumn{1}{c}{Class~$\uparrow$} & \multicolumn{1}{c}{Pred~$\downarrow$} & \multicolumn{1}{c}{Marg~$\downarrow$} & \multicolumn{1}{c}{Class~$\uparrow$} & \multicolumn{1}{c}{Pred~$\downarrow$} & \multicolumn{1}{c}{Marg~$\downarrow$} & \multicolumn{1}{c}{Class~$\uparrow$} & \multicolumn{1}{c}{Pred~$\downarrow$} \\
    \midrule
    SVAE~\cite{xu2022socialvae}  & 3.628      &  6.80$\times 10^{-5}$     &  0.0949     &  4.755     &  2.81$\times 10^{-6}$     &  0.0181     &  2.735     &  2.39$\times 10^{-5}$     &  0.0929     &   2.808    &  5.57$\times 10^{-3}$     & 0.0346  \\
        EGVAE~\cite{satorras2021en}  &  2.650    & 1.31$\times 10^{-4}$    &   0.0386    &  3.677  &  1.50$\times 10^{-4}$     &  0.0104     &  2.617   &  5.86$\times 10^{-6}$    &  0.1131     &   2.767    &  1.73$\times 10^{-6}$     & 0.0664  \\
    GeoTDM & \textbf{0.726}      & \textbf{3.48}$\mathbf{\times 10^{-2}}$     &   \textbf{0.0212}    &   \textbf{0.597}    &  \textbf{1.62}$\mathbf{\times 10^{-1}}$     & \textbf{0.0019}     &   \textbf{0.314}    &   \textbf{4.63}$\mathbf{\times 10^{-1}}$    &  \textbf{0.0235}     &   \textbf{0.403}    &  \textbf{3.35}$\mathbf{\times 10^{-1}}$     & \textbf{0.0146} \\
    \midrule
          & \multicolumn{3}{c}{Naphthalene} & \multicolumn{3}{c}{Salicylic} & \multicolumn{3}{c}{Toluene} & \multicolumn{3}{c}{Uracil} \\
                   \cmidrule(lr){2-4}\cmidrule(lr){5-7}\cmidrule(lr){8-10}\cmidrule(lr){11-13}
          & \multicolumn{1}{c}{Marg~$\downarrow$} & \multicolumn{1}{c}{Class~$\uparrow$} & \multicolumn{1}{c}{Pred~$\downarrow$} & \multicolumn{1}{c}{Marg~$\downarrow$} & \multicolumn{1}{c}{Class~$\uparrow$} & \multicolumn{1}{c}{Pred~$\downarrow$} & \multicolumn{1}{c}{Marg~$\downarrow$} & \multicolumn{1}{c}{Class~$\uparrow$} & \multicolumn{1}{c}{Pred~$\downarrow$} & \multicolumn{1}{c}{Marg~$\downarrow$} & \multicolumn{1}{c}{Class~$\uparrow$} & \multicolumn{1}{c}{Pred~$\downarrow$} \\
    \midrule
    SVAE~\cite{xu2022socialvae}  &   3.150    &  2.50$\times 10^{-2}$     &  0.2123    &  2.941     &  3.54$\times 10^{-6}$     &   0.1312    &  3.083     &   8.29$\times 10^{-5}$    &  0.2580     &  2.736     &  3.73$\times 10^{-5}$     & 0.604 \\
     EGVAE~\cite{satorras2021en}  &  3.007   & 3.17$\times 10^{-4}$    &   0.0136    &  3.314  &  3.76$\times 10^{-6}$     &  0.0221     &  2.054   &  2.77$\times 10^{-5}$    &  0.0457     &   3.570   &  2.02$\times 10^{-5}$     & 0.0212  \\
    GeoTDM &  \textbf{0.770}     &  \textbf{1.17}$\mathbf{\times 10^{-1}}$     &  \textbf{0.0093}     &  \textbf{0.559}     & \textbf{1.82}$\mathbf{\times 10^{-1}}$      &   \textbf{0.0135}    &  \textbf{0.539}     &  \textbf{1.12}$\mathbf{\times 10^{-1}}$    &  \textbf{0.0118}     &   \textbf{0.954}    &    \textbf{2.02}$\mathbf{\times 10^{-1}}$   &  \textbf{0.0116} \\
    \bottomrule
    \end{tabular}%
    }
  \label{tab:uncond-md}%
\end{table*}%

\subsection{Unconditional Generation}
\label{sec:uncond_exp}

\textbf{Experimental setup.} For generation we reuse the Charged Particle dataset and the MD17 dataset. We follow the same setup as the conditional case, except that we generate trajectories with length 20 from scratch. We compare with SGAN~\cite{gupta2018social}, SVAE~\cite{xu2022socialvae} (slightly modified to enable generation from scratch), and a VAE-modified version of EGNN~\cite{satorras2021en}, dubbed EGVAE (see App.~\ref{sec:more_exp_details}). The results of SGAN on MD17 is omitted due to mode collapse during training.

\textbf{Metrics.} We adopt three metrics adapted from time series generation to quantify the generation quality of the geometric trajectories: \emph{Marginal} scores~\cite{ni2020conditional} measure the distance between the empirical probability density functions of the generated samples and the ground truths; \emph{Classification} scores~\cite{kidger2021neural} are computed as the cross-entropy loss given by a trajectory classification model, trained on the task of distinguish whether the trajectory is generated or real; \emph{Prediction} scores~\cite{zhou2023deep} are the MSEs of a train-on-synthetic-test-on-real trajectory prediction model (a 1-layer EqMotion) that takes as input the first half of the trajectories to predict the other half.

\begin{wraptable}[8]{r}{0.4\textwidth}
\vskip -0.15in
  \centering
  \small
  \setlength{\tabcolsep}{3pt}
  \caption{Unconditional generation results on N-body Charged Particle.}
  \resizebox{1.0\linewidth}{!}{
    \begin{tabular}{lccc}
    \toprule
          & Marg~$\downarrow$ & Class~$\uparrow$ & Pred~$\downarrow$ \\
    \midrule
    SGAN~\cite{gupta2018social}  & 0.1448      &  3.98$\times 10^{-7}$     & 0.172 \\
    SVAE~\cite{xu2022socialvae}  & 0.0668      &  1.38$\times 10^{-6}$     &  0.282 \\
    EGVAE~\cite{satorras2021en}  & 0.1141      &  4.22$\times 10^{-2}$     &  0.0467 \\
    \midrule
    GeoTDM &   \textbf{0.0055}    & \textbf{5.56}$\mathbf{\times 10^{-1}}$    & \textbf{0.00978} \\
    \bottomrule
    \end{tabular}%
  \label{tab:uncond-N-body}%
  }
  \vskip -0.1in
\end{wraptable}%

\textbf{Results.} Quantitative results are displayed in Table~\ref{tab:uncond-N-body} and~\ref{tab:uncond-md} for N-body and MD17. Notably,~\method delivers samples with much higher quality than the baselines. On Charged Particles,~\method achieves a classification score of 0.556, indicating its generated samples are generally indistinguishable with the ground truths. We observe similar patterns on MD17, where~\method obtains remarkably lower marginal scores, higher classification scores, and lower prediction scores, showcasing its strong capability to model complex distributions of geometric trajectories on various geometric data. Visualizations are in Fig.~\ref{fig:uncond_vis} and more in App.~\ref{sec:more_vis}.

\subsection{Ablation Studies and Additional Use Cases}
\label{sec:ablations}

\begin{wraptable}[10]{r}{0.4\textwidth}
  \centering
  \small
  \vskip -0.28in
    \setlength{\tabcolsep}{2.0pt}
  \caption{Ablation studies. The numbers refer to ADE/FDE.}
  \resizebox{1.0\linewidth}{!}{
    \begin{tabular}{lcc}
    \toprule
          & \multicolumn{1}{c}{Charge} & \multicolumn{1}{c}{Aspirin} \\
    \midrule
    GeoTDM $\gN(\x_r^{[T]},\rmI)$ &   \textbf{0.110}/\textbf{0.258}    &  \textbf{0.107}/\textbf{0.193}\\
    \midrule
    Fixed $\gN(\bm{0},\rmI)$ &   0.220/0.485    &  0.235/0.393\\
    $\gN(\mathrm{CoM}(\x_c^{(T_c-1)}),\rmI)$ &  0.135/0.298     & 0.119/0.212 \\
    $\gN(\x_c^{(T_c-1)},\rmI)$ &  0.123/0.282     &  0.110/0.204\\
     \midrule
     w/o Equivariance & 0.251/0.542      & 0.252/0.440 \\
      w/o Attention &   0.133/0.312    &  0.114/0.208\\
      w/o Shift invariance &  0.139/0.330     &  0.112/0.212\\
    \bottomrule
    \end{tabular}%
    }
  \label{tab:ablation}%
  \vskip -0.15in
\end{wraptable}%

\textbf{Ablations on diffusion prior.} We investigate different priors, including non-equivariant $\gN(\bm{0},\rmI)$ (\emph{i.e.}, DDPM~\cite{ho2020denoising}), equivariant but fixed CoM prior $\gN(\mathrm{CoM}(\x_c^{(T_c-1)}),\rmI)$ and point-wise equivariant prior $\gN(\x_c^{(T_c-1)},\rmI)$. In Table~\ref{tab:ablation} we see that non-equivariant prior leads to significantly worse performance. The CoM prior, though equivariant, is still inferior due to extra overhead in denoising the nodes initialized around the CoM to the original geometry.~\method yields the lowest error due to the flexible learnable prior.

\textbf{Ablations on EGTN.} We further ablate the designs of the denoising model. \textbf{1.} Equivariance. We replace all EGCL layers into non-equivariant MPNN~\cite{gilmer2017neural} layers with same hidden dimension, leading to non-equivariant transition kernels. The performance becomes much worse, verifying the necessity of equivariance. \textbf{2.} Attention. We substitute the attentions in temporal layers by equivariant convolutions (see App.~\ref{sec:more_exp_details}). Compared with this variant,~\method enjoys larger capacity with attention and yields lower prediction error especially on Charged Particle where the particles generally move faster. \textbf{3.} Temporal shift invariance. We employ relative temporal embeddings in attention, which enhances the generalization. Notably, the FDE improves from 0.330 to 0.258 on Charged Particle compared with the absolute temporal embedding.

\begin{figure}[t!]
    \centering
    \includegraphics[width=0.99\linewidth]{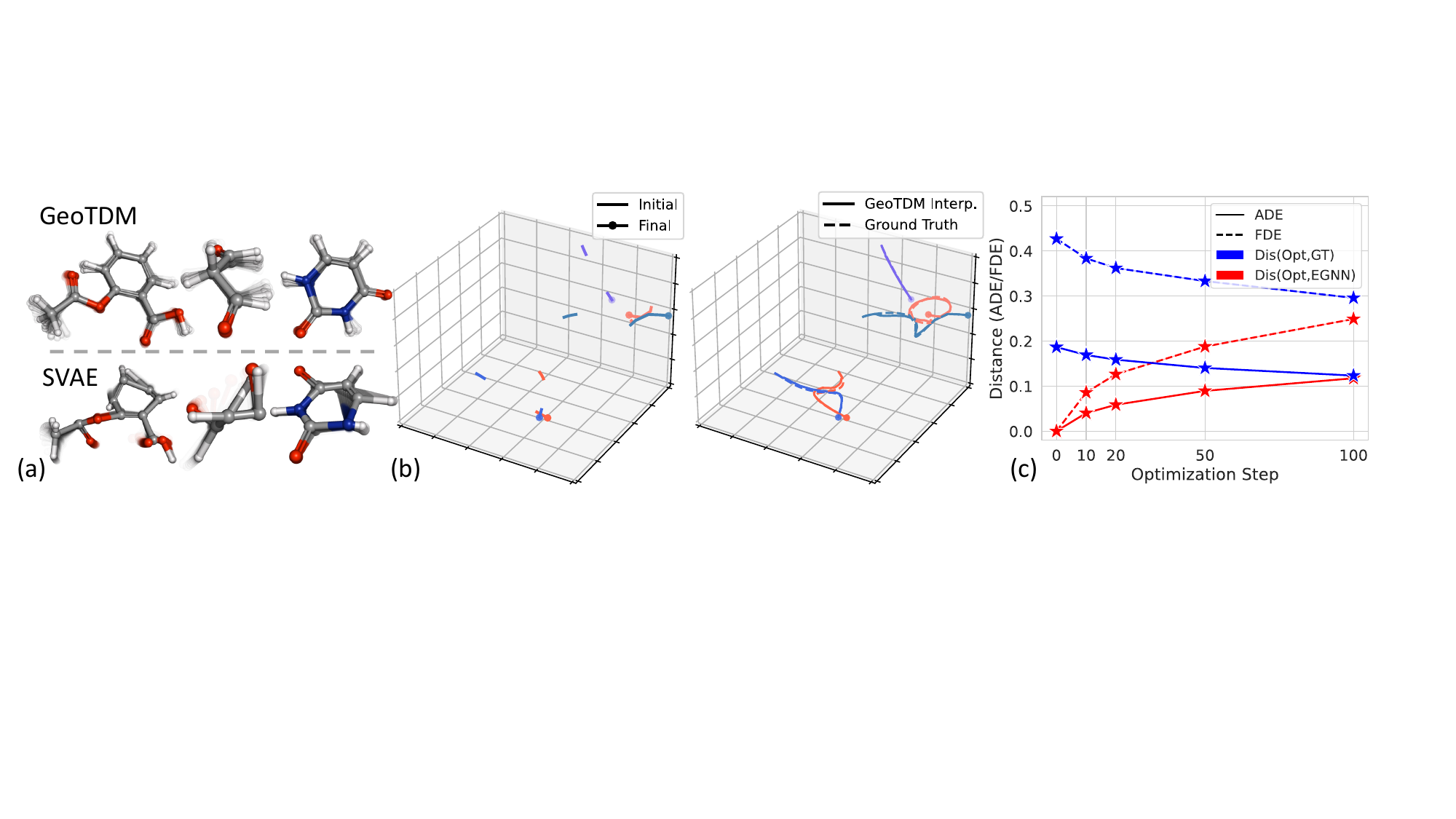}
    \caption{(a) Unconditional generation samples on MD17. GeoTDM generates MD trajectories with much higher quality (see more in App.~\ref{sec:more_vis}). (b) Interpolation. \emph{Left}: the given initial and final 5 frames. \emph{Right}:~\method interpolation and GT. (c) Optimization by~\method on predictions of EGNN. Dis(Opt, GT)/Dis(Opt, EGNN) is the distance between optimized trajectories and GT/EGNN.}
    \label{fig:usecase}
    \vskip -0.1in
\end{figure}
\textbf{Temporal interpolation.}~\method is able to perform interpolation as a special case of the conditional case. We demonstrate such capability on Charged Particle. The model is provided with the first 5 and last 5 frames, and the task is to generate the intermediate 20 frames as interpolation.~\method reports an ADE of 0.055 on the test set, while a linear interpolation baseline reports an ADE of 0.171. From the qualitative visualizations in Fig.~\ref{fig:usecase}, we clearly see that~\method can capture the complex dynamics and yield high-quality non-linear interpolations between the given initial and final frames.

\textbf{Optimization.} We further illustrate that \method can conduct optimization~\cite{luo2022antigen,meng2022sdedit} on given trajectories (\eg, those simulated by an EGNN) by simulating $K$ steps through the forward diffusion and then performing the reverse denoising. From Fig.~\ref{fig:usecase} we see the distance between the optimized trajectory and GT gradually decreases as the optimization step grows. This reveals~\method can effectively optimize the given trajectory towards the ground truth distribution.

\section{Discussion}

\textbf{Limitations.} Akin to other diffusion models, GeoTDM resorts to multi-step sampling which may require more compute. We present empirical runtime benchmarks and more discussions in App.~\ref{sec:sampling_time}.

\textbf{Conclusion.} 
We present~\method, a diffusion model built over distribution of geometric trajectories. It is designed to preserve the symmetry of geometric systems, achieved by using EGTN, a novel SE(3)-equivariant geometric trajectory model, as the denoising network. We evaluate~\method on various datasets for unconditional generation, interpolation, extrapolation and optimization, showing that it consistently outperforms the baselines. Future works include streamlining~\method and extending it to more tasks such as protein MD, robot manipulation, and motion synthesis.

\begin{ack}
    We thank the anonymous reviewers for the helpful feedback on improving the manuscript. This work was supported by ARO
(W911NF-21-1-0125), ONR (N00014-23-1-2159), and the CZ Biohub.
\end{ack}

\bibliographystyle{plain}
\bibliography{ref}

\newpage

\appendix

{
\hypersetup{linkcolor=black}
\addcontentsline{toc}{section}{Appendix} 
\part{Appendix} 
\parttoc 
}

\section{Proofs}

\subsection{Unconditional Case}
\label{sec:uncond-detailed-proof}

We note that, na\"ively, a distribution $p(\mathbf{x}^{[T]})$ can not be translation invariant. In particular, this would imply that $p(\mathbf{x}^{[T]}) = p(\mathbf{x}^{[T]} + \mathbf{r})$ for all $\mathbf{r} \in \mathbb{R}^D$, but this would imply that $p(\mathbf{x}^{[T]}) = 0$ uniformly, a contradiction.

Instead, we derive an equivalent invariance condition by restricting SE$(D)$ to its maximally compact subgroup. In particular, we note that it is possible to define SO$(D)$-invariant distributions (as this group is compact), and $\mathrm{SE}(D)/\mathrm{SO}(D) \cong \mathbb{T}$, the translation group. The natural way to quotient out our base space $\mathbb{R}^{T \times N \times D}/ \mathbb{T} \cong \mathbb{R}^{(T \times N - 1) \times D}$ is to zero-center our data (along each dimension). 

However, for practical purposes, we will refer to our construction as $\mathrm{SE}(D)$-invariant. In particular, since all inputs $\mathbf{x} \in \mathbb{R}^{T \times N \times D}$ are first zero-centered to be projected to $\mathbb{R}^{T \times N \times D}/\mathbb{T}$, the ``lifted" unnormalized measure is SE$(D)$ invariant. 

We will define $P$ as our zero-centering operation $P(\x^{[T]})=\x^{[T]}-\frac{1}{T}\sum_{t=0}^{T-1}\mathrm{CoM}(\x^{(t)})$, with $\mathrm{CoM}(\x^{(t)})=\frac{1}{N}\sum_{i=1}^N\x^{(t)}_i$ and our restricted $(T \times N - 1) \times D$ Gaussian as $\tilde\gN(\rvy | \rvx, \bm\Sigma)$, which can be represented in the ambient space as a degenerated Gaussian variable
\begin{align}
    \tilde\gN(\rvy | \rvx, \bm\Sigma)=\frac{1}{(2\pi)^{2/((T \times N - 1) \times D)}\det^\ast(\bm\Sigma_\rmP)^{1/2}}\exp\left(-\frac{1}{2}(\rvy - \rvx)^\top \bm\Sigma_\rmP^+(\rvy - \rvx)\right)
\end{align}
where $\bm\Sigma_\rmP = \rmP \bm\Sigma \rmP^\top$ and $\bm\Sigma_\rmP^+$ is the pseudo-inverse (and $\det^*$ is the determinant restricted to the subspace). Note that $\rmP$ is symmetric and idempotent. Then specifically when $\bm\Sigma=\rmI$, we have $\bm\Sigma_\rmP = \rmP \rmP^\top$, then $\bm\Sigma_\rmP^+ = \rmP$ as $(\rmP \rmP^\top) \rmP (\rmP \rmP^\top) = \rmP$, since $\rmP \rmP = \rmP$ and $\rmP = \rmP^\top$.

\textbf{Base distribution.}  We require the base distribution to be SO$(3)$-invariant. In practice, we let $p_\gT(\tilde{\x}_\gT^{[T]})=\tilde{\gN}(\bm{0},\rmI)$ to be the Gaussian distribution in the translation-invariant subspace

\textbf{Transition kernel.} For the transition kernel, we specify it as $p_\theta(\tilde{\x}_{\tau-1}^{[T]}\mid\tilde{\x}_\tau^{[T]}) = \tilde\gN(\tilde{\bm\mu}_\theta(\tilde\x_\tau^{[T]},\tau), \sigma_\tau^2\rmI)$. In order to ensure $p_\theta(\rmR\tilde{\x}_{\tau-1}^{[T]}\mid\rmR\tilde{\x}_\tau^{[T]})=p_\theta(\tilde{\x}_{\tau-1}^{[T]}\mid\tilde{\x}_\tau^{[T]})$, it suffices to make $\tilde{\bm\mu}_\theta(\tilde\x_\tau^{[T]},\tau)$ an SO$(3)$-equivariant function. In this way,
\begin{align}
    p_\theta(\rmR\tilde{\x}_{\tau-1}^{[T]}\mid\rmR\tilde{\x}_\tau^{[T]})&= \tilde\gN(\rmR\tilde{\x}_{\tau-1}^{[T]}; \tilde{\bm\mu}_\theta(\rmR\tilde\x_\tau^{[T]}), \sigma_\tau^2\rmI),\\
    &=\tilde\gN(\rmR\tilde{\x}_{\tau-1}^{[T]}-\tilde{\bm\mu}_\theta(\rmR\tilde\x_\tau^{[T]}); \bm{0} , \sigma_\tau^2\rmI),\\
    &=\tilde\gN(\rmR\tilde{\x}_{\tau-1}^{[T]}-\rmR\tilde{\bm\mu}_\theta(\tilde\x_\tau^{[T]}); \bm{0} , \sigma_\tau^2\rmI),\\
    &=\tilde\gN(\rmR(\tilde{\x}_{\tau-1}^{[T]}-\tilde{\bm\mu}_\theta(\tilde\x_\tau^{[T]})); \bm{0} , \sigma_\tau^2\rmI),\\
   &=\tilde\gN(\tilde{\x}_{\tau-1}^{[T]}-\tilde{\bm\mu}_\theta(\tilde\x_\tau^{[T]}); \bm{0} , \sigma_\tau^2\rmI),  \\
     &=\tilde\gN(\tilde{\x}_{\tau-1}^{[T]}; \tilde{\bm\mu}_\theta(\tilde\x_\tau^{[T]}), \sigma_\tau^2\rmI), \\
    &=p_\theta(\tilde{\x}_{\tau-1}^{[T]}\mid\tilde{\x}_\tau^{[T]}),
\end{align}
which permits the SO$(3)$-equivariance of the transition kernel. In our implementation, we further re-parameterize $\tilde{\bm\mu}_\theta(\tilde\x_\tau^{[T]},\tau)$ as,
\begin{align}
    \tilde{\bm\mu}_\theta(\tilde\x_\tau^{[T]},\tau)=\frac{1}{\sqrt{\alpha_\tau}}\left(\tilde\x^{[T]}_\tau - \frac{\beta_\tau}{\sqrt{1-\bar\alpha_\tau}}\tilde{\bm\epsilon}_\theta(\tilde\x_\tau^{[T]},\tau) \right),
\end{align}
where we instead ensure $\tilde{\bm\epsilon}_\theta(\tilde\x_\tau^{[T]},\tau)$ to be SO$(3)$-equivariant and its output should lie in the subspace $\gX_\rmP$.

We now prove the following proposition, which states that if the base distribution is SO$(3)$-invariant and the transition kernel is SO$(3)$-equivariant, then the marginal at any diffusion time step is also SO$(3)$-invariant.

\begin{proposition}
    If the prior $p_\gT(\tilde\x_\gT^{[T]})$ is SO$(3)$-invariant, the transition kernels $p_{\tau-1}(\tilde{\x}_{\tau-1}^{[T]}\mid\tilde{\x}_\tau^{[T]}), \forall\tau\in\{1,\cdots,\gT\}$ are SO$(3)$-equivariant, then the marginal $p_\tau(\tilde\x_\tau^{[T]})$ at any time step $\tau\in\{0,\cdots,\gT\}$ is also SO$(3)$-invariant.
\end{proposition}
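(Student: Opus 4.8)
The plan is to prove the statement by induction on the diffusion step, going backwards from $\tau=\gT$ down to $\tau=0$, since the reverse process generates $\tilde\x_\tau^{[T]}$ from $\tilde\x_{\tau+1}^{[T]}$ via the transition kernel. The base case is the hypothesis that $p_\gT(\tilde\x_\gT^{[T]})$ is SO$(3)$-invariant. For the inductive step, I assume $p_{\tau}(\tilde\x_{\tau}^{[T]})$ is SO$(3)$-invariant and show the same for $p_{\tau-1}(\tilde\x_{\tau-1}^{[T]})$, using the marginalization identity
\begin{align}
p_{\tau-1}(\tilde\x_{\tau-1}^{[T]})=\int p_{\tau-1}(\tilde\x_{\tau-1}^{[T]}\mid\tilde\x_{\tau}^{[T]})\,p_{\tau}(\tilde\x_{\tau}^{[T]})\,\dd\tilde\x_{\tau}^{[T]}.
\end{align}

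The core computation is to evaluate $p_{\tau-1}(\rmR\tilde\x_{\tau-1}^{[T]})$ and show it equals $p_{\tau-1}(\tilde\x_{\tau-1}^{[T]})$ for every rotation $\rmR\in\text{SO}(3)$. Substituting into the marginalization identity, I would then perform the change of variables $\tilde\x_{\tau}^{[T]}\mapsto\rmR\tilde\x_{\tau}^{[T]}$ inside the integral; since $\rmR$ acts orthogonally on each coordinate block and preserves the translation-invariant subspace $\gX_\rmP$, the Jacobian determinant is $1$ and the domain of integration is unchanged. After the substitution the integrand becomes $p_{\tau-1}(\rmR\tilde\x_{\tau-1}^{[T]}\mid\rmR\tilde\x_{\tau}^{[T]})\,p_{\tau}(\rmR\tilde\x_{\tau}^{[T]})$. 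Now I apply the two hypotheses: the SO$(3)$-equivariance of the transition kernel gives $p_{\tau-1}(\rmR\tilde\x_{\tau-1}^{[T]}\mid\rmR\tilde\x_{\tau}^{[T]})=p_{\tau-1}(\tilde\x_{\tau-1}^{[T]}\mid\tilde\x_{\tau}^{[T]})$, and the inductive hypothesis gives $p_{\tau}(\rmR\tilde\x_{\tau}^{[T]})=p_{\tau}(\tilde\x_{\tau}^{[T]})$. Hence the transformed integral equals the original expression for $p_{\tau-1}(\tilde\x_{\tau-1}^{[T]})$, completing the inductive step.

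The one technical subtlety — and the place I would be most careful — is that all densities here live on the lower-dimensional subspace $\gX_\rmP$ rather than on the full ambient space, so the "change of variables" and "Jacobian equals $1$" claims must be interpreted with respect to the $(TN-1)D$-dimensional Lebesgue measure on $\gX_\rmP$. The key facts that make this rigorous are that $\rmR$ (acting as $\rmI_D\otimes\rmR$ or, in the paper's convention, blockwise right-multiplication) commutes with the projection $\rmP$ and restricts to an orthogonal transformation of $\gX_\rmP$; this is exactly why the restricted Gaussian $\tilde\gN(\bm 0,\rmI)$ is SO$(3)$-invariant and why the degenerate-Gaussian transition kernel defined in the appendix transforms equivariantly. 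I would state this compatibility between $\rmR$ and $\rmP$ explicitly as a preliminary observation, then the measure-theoretic bookkeeping goes through exactly as in the standard (full-space) argument. Everything else is routine manipulation of the marginalization integral.
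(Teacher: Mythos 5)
Your proposal is correct and follows essentially the same route as the paper's proof: backward induction from $\tau=\gT$ with the marginalization identity, a rotation change of variables of unit Jacobian, and then the kernel equivariance plus the inductive invariance hypothesis. Your explicit remark that $\rmR$ commutes with the projection $\rmP$ and acts orthogonally on $\gX_\rmP$ is a welcome clarification of a point the paper leaves implicit, but it does not change the argument.
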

\begin{proof}
    The proof is given by induction.

    \textbf{Induction base.} When $\tau=\gT$, we have the marginal being the prior $p_\gT(\tilde\x_\gT^{[T]})$, which is SO$(3)$-invariant.

    \textbf{Induction step.} Suppose the marginal at diffusion time step $\tau$ is SO$(3)$-invariant, \ie, $p_\tau(\tilde\x_\tau^{[T]})=p_\tau(\rmR\tilde\x_\tau^{[T]})$, then we have the following derivation for the marginal at time step $\tau-1$:
    \begin{align}
        p_{\tau-1}(\rmR\tilde\x_{\tau-1}^{[T]}) &= \int  p_{\tau-1}(\rmR\tilde\x_{\tau-1}^{[T]}\mid \tilde\x_\tau^{[T]})  p_{\tau}(\tilde\x_\tau^{[T]}) \dd \tilde\x_\tau^{[T]},\\
        &= \int p_{\tau-1}(\rmR\tilde\x_{\tau-1}^{[T]}\mid \rmR\rmR^{-1}\tilde\x_\tau^{[T]})  p_{\tau}(\rmR\rmR^{-1}\tilde\x_\tau^{[T]}) \dd \tilde\x_\tau^{[T]}, \\
       &= \int p_{\tau-1}(\tilde\x_{\tau-1}^{[T]}\mid \rmR^{-1}\tilde\x_\tau^{[T]})  p_{\tau}(\rmR^{-1}\tilde\x_\tau^{[T]}) \dd \tilde\x_\tau^{[T]}, \\
        &= \int p_{\tau-1}(\tilde\x_{\tau-1}^{[T]}\mid \tilde\rvy_\tau^{[T]})  p_{\tau}(\tilde\rvy_\tau^{[T]}) \det (\rmR) \dd \tilde\rvy_\tau^{[T]}, \\
        &=   p_{\tau-1}(\tilde\x_{\tau-1}^{[T]}).
    \end{align}
\end{proof}
Notably, for the final step at $\tau=0$, the marginal $p_0(\rmR\tilde\x_0^{[T]})$ is also SO$(3)$-invariant, indicating the final sample from the entire geometric trajectory diffusion process resides in an SO$(3)$-invariant distribution, hence the physical symmetry being well preserved.

\begin{algorithm}[htbp]
\small
\caption{Training Procedure of~\method-uncond}\label{alg:train-uncond}
\begin{algorithmic}[1]
\REPEAT
\STATE Sample $\tilde{\bm\epsilon}^{[T]}\sim\tilde\gN(\bm{0},\rmI)^{[T]}$, $\tau\in\mathrm{Unif}(\{1,\cdots,\gT\})$,\\\quad\quad\quad$\tilde\x^{[T]}\sim \tilde\gD_{\mathrm{data}}$
\STATE $\tilde\x_\tau^{[T]}\gets \sqrt{\bar{\alpha}_\tau}\tilde\x^{[T]}+\sqrt{1-\bar{\alpha}_\tau}\tilde{\bm\epsilon}^{[T]}$
\STATE Take gradient descent step on \\\quad\quad\quad\quad $\nabla_{\bm\theta}\|\tilde{\bm\epsilon}^{[T]}-\tilde{\bm\epsilon}_{\bm\theta}(\tilde\x_\tau^{[T]},\tau)  \|_2^2$
\UNTIL converged
\end{algorithmic}
\end{algorithm}

\begin{algorithm}[htbp]
\small
\caption{Sampling Procedure of~\method-uncond}\label{alg:sampling-uncond}
\begin{algorithmic}[1]
\STATE Sample $\tilde{\x}_{\gT}^{[T]}\sim\tilde{\gN}(\bm{0},\rmI)^{[T]}$
\FOR{$\tau \gets \gT,\cdots, 1$}
\STATE Sample $\tilde\rvz_\tau^{[T]}\sim\tilde\gN(\bm{0},\rmI)^{[T]}$ if $\tau>1$ else $\tilde\rvz_\tau^{[T]}=\bm{0}$
\STATE $\tilde\x_{\tau-1}^{[T]}\gets\frac{1}{\sqrt{\alpha_\tau}}\left(\tilde\x_\tau^{[T]}-\frac{1-\alpha_\tau}{\sqrt{1-\bar\alpha_\tau}}\tilde{\bm\epsilon}_{\bm\theta}(\tilde\x_\tau^{[T]},\tau)\right)+\sigma_\tau\tilde\rvz_\tau^{[T]}$
\ENDFOR
\STATE \textbf{return $\x_0^{[T]}$}
\end{algorithmic}
\end{algorithm}

\begin{algorithm}[t!]
\small
\caption{Training Procedure of~\method-cond}\label{alg:train}
\begin{algorithmic}[1]
\REPEAT
\STATE $\x_r^{[T]}\gets\mathrm{EquiPrior}_{\bm\eta,\bm\gamma}(\x_c^{[T_c]})$\quad\quad\quad \COMMENT{Eq.~\ref{eq:anchor_frame_translation_constraint}}
\STATE Sample $\bm\epsilon^{[T]}\sim\gN(\bm{0},\rmI)$, $\tau\in\mathrm{Unif}(\{1,\cdots,\gT\})$,\\\quad\quad\quad$(\x^{[T]},\x_c^{[T_c]},)\sim p_{\mathrm{data}}$
\STATE $\x_\tau^{[T]}\gets \sqrt{\bar{\alpha}_\tau}(\x^{[T]}-\x_r^{[T]}) + \x_r^{[T]}+\sqrt{1-\bar{\alpha}_\tau}\bm\epsilon^{[T]}$
\STATE Take gradient descent step on \\\quad\quad\quad\quad $\nabla_{\bm\theta,\bm\eta,\bm\gamma}\|\bm\epsilon^{[T]}-\bm\epsilon_{\bm\theta}(\x_\tau^{[T]}, \x_c^{[T_c]},\tau)  \|^2$
\UNTIL converged
\end{algorithmic}
\end{algorithm}

\begin{algorithm}[t!]
\small
\caption{Sampling Procedure of~\method-cond}\label{alg:sampling}
\begin{algorithmic}[1]
\STATE $\x_r^{[T]}\gets\mathrm{EquiPrior}_{\bm\eta,\bm\gamma}(\x_c^{[T_c]})$\quad\quad\quad\quad \COMMENT{Eq.~\ref{eq:anchor_frame_translation_constraint}}
\STATE Sample $\x_{\gT}^{[T]}\sim\gN(\x_r^{[T]},\rmI)$, $\x_c^{[T_c]}\sim \gD_\mathrm{data}$
\FOR{$\tau \gets \gT,\cdots, 1$}
\STATE Sample $\rvz_\tau^{[T]}\sim\gN(\bm{0},\rmI)^{[T]}$ if $\tau>1$ else $\rvz_\tau^{[T]}=\bm{0}$
\STATE $\x_{\tau-1}^{[T]}\gets\frac{1}{\sqrt{\alpha_\tau}}\left(\x_\tau^{[T]}-\x_r^{[T]}-\frac{1-\alpha_\tau}{\sqrt{1-\bar\alpha_\tau}}\bm\epsilon_{\bm\theta}(\x_\tau^{[T]}, \x_c^{[T_c]},\tau)\right)+\x_r^{[T]}+\sigma_\tau\rvz_\tau^{[T]}$
\ENDFOR
\STATE \textbf{return $\x_0^{[T]}$}
\end{algorithmic}
\end{algorithm}

\subsection{Conditional Case}
\label{sec:cond-detailed-proof}

In the conditional case, we target on modeling the conditional distribution $p( \x^{[T]}\mid \x_c^{[T_c]})$. The desired constraint is the following equivariance condition: $p( \x^{[T]}\mid \x_c^{[T_c]}) = p( g\cdot \x^{[T]}\mid g\cdot \x_c^{[T_c]})$, for all $g\in\text{SE}(3)$.

\textbf{Construction of the equivariant prior.} The prior is constructed through Eq.~\ref{eq:anchor_frame_translation_constraint}. Here we formally show that this guarantees SE$(3)$-equivariance of the prior. For convenience we repeat Eq.~\ref{eq:anchor_frame_translation_constraint} below.
\begin{align}
    \x_r^{(t)}= \sum_{s\in[T_c]}\rvw^{(t,s)}\hat\x_c^{(s)},\quad\text{s.t.} \sum_{s\in[T_c]}\rvw^{(t,s)}=\bm{1},
\end{align}
Then, we have
\begin{align}
    \x'^{(t)}_r&=\sum_{s\in[T_c]}\rvw'^{(t,s)}\hat\x'^{(s)}_c,\\
    &=\sum_{s\in[T_c]}\rvw^{(t,s)}(\rmR\hat\x_c^{(s)}+\rvr),\\
    &=\sum_{s\in[T_c]}\rvw^{(t,s)}(\rmR\hat\x_c^{(s)})  +   \sum_{s\in[T_c]}\rvw^{(t,s)}\rvr,\\
    &=\rmR \sum_{s\in[T_c]}\rvw^{(t,s)}\hat\x_c^{(s)} + \rvr,\\
    &=\rmR\x_r^{(t)}+\rvr,
\end{align}
$\forall$ rotation matrix $\rmR$ and $\rvr\in\sR^3$, which completes the proof.

\textbf{Base distribution.} We propose to leverage the following base distribution.

\begin{align}
    p_\gT(\x_\gT^{[T]}\mid \x_c^{[T_c]}) = \gN(\x_\gT^{[T]};\x_r^{[T]}, \rmI),
\end{align}
where $\x_r^{[T]}=\mathrm{EquiPrior}(\x_c^{[T_c]})$ is SE$(3)$-equivariant with respect to the condition $\x_c^{[T_c]}$.
With such choice, the base distribution above is SE$(3)$-equivariant, since
\begin{align}
     p_\gT(\rmR\x_\gT^{[T]}+\rvr\mid \rmR\x_c^{[T]}+\rvr) &= \gN(\rmR\x_\gT^{[T]}+\rvr;\rmR\x_r^{[T]}+\rvr, \rmI),\\
     &=\gN(\rmR\x_\gT^{[T]};\rmR\x_r^{[T]},\rmI),\\
     &=\gN(\x_\gT^{[T]};\x_r^{[T_c]},\rmI),
\end{align}
where the last equation is due to $\det(\rmR^\top\rmR)=\rmI, \|\x_\gT^{[T]}-\x_r^{[T]}\|^2=\|  \rmR\x_\gT^{[T]}-\rmR\x_r^{[T]} \|^2$,
which also gives the proof for Theorem~\ref{theorem:mean-equivariant-distribution-equivariant} by a mild substitution of the notations.

\textbf{Transition kernel.} The transition kernel is given by
\begin{align}
      p_{\theta}(\x_{\tau-1}^{[T]}\mid \x_{\tau}^{[T]},\x_c^{[T_c]}) =\gN(\x_{\tau-1}^{[T]};\bm\mu_\theta(\x_{\tau}^{[T]},\x_c^{[T_c]},\tau), \sigma^2_\tau\rmI),
\end{align}
where $\bm\mu_\theta(\x_{\tau}^{[T]},\x_c^{[T_c]},\tau)$ parameterized to be SE$(3)$-equivariant with respect to its input $\x_{\tau}^{[T]},\x_c^{[T_c]}$. In practice, we re-parameterize it as,
\begin{align}
    \bm\mu_\theta(\x_{\tau}^{[T]},\x_c^{[T_c]},\tau)=\x_r^{[T]}+\frac{1}{\sqrt{\alpha_\tau}}\left( \x_\tau^{[T]} -\x_r^{[T]}  -\frac{\beta_\tau}{\sqrt{1-\bar\alpha_\tau}} \bm\epsilon_\theta (\x_{\tau}^{[T]},\x_c^{[T_c]},\tau)   \right),
\end{align}
where $\bm\epsilon_\theta (\x_{\tau}^{[T]},\x_c^{[T_c]},\tau)$ is an SO$(3)$-equivariant but translation-invariant function. It is then easy to see that  $\bm\mu_\theta(\x_{\tau}^{[T]},\x_c^{[T_c]},\tau)$ meets the SE$(3)$-equivariance as desired.

\begin{proposition}
    With such parameterization, optimizing the variational lower bound is equivalent to optimizing the following objective, up to certain re-weighting:
    \begin{align}
    \label{eq:app-cond-noise-pred}
        \gL =\| \bm\epsilon_\theta (\x_{\tau}^{[T]},\x_c^{[T_c]},\tau) -\bm\epsilon  \|_2^2.
    \end{align}
\end{proposition}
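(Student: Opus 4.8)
\emph{Proof proposal.} The plan is to reduce the conditional objective to the vanilla DDPM analysis of~\cite{ho2020denoising} by a single $\tau$-independent change of variables. Since $\x_r^{[T]}=\mathrm{EquiPrior}(\x_c^{[T_c]})$ does not depend on the diffusion index $\tau$, define the \emph{centered} latents $\rvy_\tau^{[T]}\coloneqq\x_\tau^{[T]}-\x_r^{[T]}$ for $\tau=0,\dots,\gT$. First I would check that under this substitution the modified forward kernel $q(\x_\tau^{[T]}\mid\x_{\tau-1}^{[T]},\x_c^{[T_c]})=\gN(\x_\tau^{[T]};\x_r^{[T]}+\sqrt{1-\beta_\tau}(\x_{\tau-1}^{[T]}-\x_r^{[T]}),\beta_\tau\rmI)$ becomes exactly the standard kernel $q(\rvy_\tau^{[T]}\mid\rvy_{\tau-1}^{[T]})=\gN(\rvy_\tau^{[T]};\sqrt{1-\beta_\tau}\rvy_{\tau-1}^{[T]},\beta_\tau\rmI)$, so that $\rvy_\tau^{[T]}=\sqrt{\bar\alpha_\tau}\rvy_0^{[T]}+\sqrt{1-\bar\alpha_\tau}\bm\epsilon$, the forward posterior $q(\rvy_{\tau-1}^{[T]}\mid\rvy_\tau^{[T]},\rvy_0^{[T]})$ has the familiar closed form $\gN(\tilde{\bm\mu}_\tau(\rvy_\tau^{[T]},\rvy_0^{[T]}),\tilde\beta_\tau\rmI)$, and $q(\rvy_\gT^{[T]}\mid\rvy_0^{[T]})\approx\gN(\bm{0},\rmI)$ matches the $\rvy$-form of the prior. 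The reverse kernel, written in the $\rvy$ variables, reads $p_\theta(\rvy_{\tau-1}^{[T]}\mid\rvy_\tau^{[T]})=\gN\big(\frac{1}{\sqrt{\alpha_\tau}}(\rvy_\tau^{[T]}-\frac{\beta_\tau}{\sqrt{1-\bar\alpha_\tau}}\bm\epsilon_\theta(\x_\tau^{[T]},\x_c^{[T_c]},\tau)),\sigma_\tau^2\rmI\big)$, which is precisely the DDPM noise-prediction parameterization.

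The key observation that makes the reduction work is that the Kullback--Leibler divergence between two Gaussians depends on the means only through their difference. Hence every term of the variational bound $\gL_{\mathrm{vlb}}=\E_q\big[\KL(q(\x_\gT^{[T]}\mid\x_0^{[T]},\x_c^{[T_c]})\|p_\gT(\x_\gT^{[T]}\mid\x_c^{[T_c]}))+\sum_{\tau=2}^{\gT}\KL(q(\x_{\tau-1}^{[T]}\mid\x_\tau^{[T]},\x_0^{[T]},\x_c^{[T_c]})\|p_\theta(\x_{\tau-1}^{[T]}\mid\x_\tau^{[T]},\x_c^{[T_c]}))-\log p_\theta(\x_0^{[T]}\mid\x_1^{[T]},\x_c^{[T_c]})\big]$ equals the corresponding term written entirely in the centered variables $\rvy$, since the common $\x_r^{[T]}$ shift of the two Gaussian means cancels inside each divergence. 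What remains is exactly the variational bound of an unconditioned DDPM on $\{\rvy_\tau^{[T]}\}$, and I would invoke the derivation of~\cite{ho2020denoising} verbatim: substituting $\rvy_0^{[T]}=\frac{1}{\sqrt{\bar\alpha_\tau}}(\rvy_\tau^{[T]}-\sqrt{1-\bar\alpha_\tau}\bm\epsilon)$ into $\tilde{\bm\mu}_\tau$ and comparing with the reverse mean collapses the $\tau$-th KL term to $\frac{\beta_\tau^2}{2\sigma_\tau^2\alpha_\tau(1-\bar\alpha_\tau)}\|\bm\epsilon-\bm\epsilon_\theta(\x_\tau^{[T]},\x_c^{[T_c]},\tau)\|_2^2$ up to an additive constant, and the reconstruction term reduces to the same expression at $\tau=1$. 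Summing over $\tau$ and discarding the $\theta$-independent pieces gives $\gL_{\mathrm{vlb}}=\sum_{\tau}w_\tau\,\E\,\|\bm\epsilon-\bm\epsilon_\theta(\x_\tau^{[T]},\x_c^{[T_c]},\tau)\|_2^2+\text{const}$ with $w_\tau=\frac{\beta_\tau^2}{2\sigma_\tau^2\alpha_\tau(1-\bar\alpha_\tau)}$, i.e.\ the objective~\eqref{eq:app-cond-noise-pred} up to the reweighting $w_\tau\mapsto1$.

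The only point that needs care beyond the standard argument is the prior-matching term $\KL(q(\x_\gT^{[T]}\mid\x_0^{[T]},\x_c^{[T_c]})\|p_\gT(\x_\gT^{[T]}\mid\x_c^{[T_c]}))$: unlike in ordinary DDPM it is not manifestly parameter-free, because $\x_r^{[T]}$ carries the learnable parameters $\bm\eta,\bm\gamma$. After the $\rvy$-shift it becomes $\KL(\gN(\sqrt{\bar\alpha_\gT}\rvy_0^{[T]},(1-\bar\alpha_\gT)\rmI)\|\gN(\bm{0},\rmI))$, whose only dependence on $\bm\eta,\bm\gamma$ is through $\sqrt{\bar\alpha_\gT}\,\rvy_0^{[T]}=\sqrt{\bar\alpha_\gT}\,(\x_0^{[T]}-\x_r^{[T]})$; since the schedule is chosen with $\bar\alpha_\gT\approx0$ this contribution is negligible and, as is standard, dropped in the simplified objective — this is the approximation hidden inside ``up to certain re-weighting'' (the gradient that trains $\bm\eta,\bm\gamma$ survives through the noise-prediction and reconstruction terms, since $\x_\tau^{[T]}$ depends on $\x_r^{[T]}$). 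I expect the main obstacle to be purely bookkeeping: keeping the two roles of $\x_r^{[T]}$ straight — constant across $\tau$, yet learnable in $\bm\eta,\bm\gamma$ — so that the translation-invariance of the KL is applied to the right pair of means at each step.
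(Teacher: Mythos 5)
Your proposal is correct and follows essentially the same route as the paper: the appendix proof likewise reduces to the DDPM derivation of Ho et al.\ ``with all latent variables replaced by $\x_\tau^{[T]}-\x_r^{[T]}$,'' and arrives at the identical per-step weight $\frac{\beta_\tau^2}{2\sigma_\tau^2\alpha_\tau(1-\bar\alpha_\tau)}$. Your handling of the prior-matching term is in fact slightly more careful than the paper's (which asserts $\x_r^{[T]}$ cancels in $\gL_\gT$, whereas you correctly note a residual $\sqrt{\bar\alpha_\gT}\,(\x_0^{[T]}-\x_r^{[T]})$ dependence that is only negligible because $\bar\alpha_\gT\approx 0$), but the conclusion is the same.
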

\begin{proof}
    We define $q(\x_\tau^{[T]}|\x_{\tau-1}^{[T]})\coloneqq\gN(\x_\tau^{[T]}; \x_r+\sqrt{1-\beta_\tau}(\x^{[T]}_{\tau-1}-\x_r), \beta_\tau\rmI)$, which yields $q(\x_\tau^{[T]}|\x_0^{[T]})=\gN(\x_\tau^{[T]}; \x_r+\sqrt{\bar\alpha_\tau}(\x_0^{[T]}-\x_r), (1-\bar\alpha_\tau)\rmI)$. The proof then generally follows~\cite{ho2020denoising} but with all latent variables in~\cite{ho2020denoising} being replaced by $\x^{[T]}_\tau-\x_r^{[T]}$. Then the terms in the VLB are given by,
    \begin{align}
        \gL_{\tau-1}&=D_\mathrm{KL}(q(\x^{[T]}_{\tau-1}\mid\x^{[T]}_{\tau},\x^{[T]}_0)\|p_\theta(\x_{\tau-1}^{[T]}\mid\x_\tau^{[T]})),\\
       &=\E_{\x_0^{[T]},\bm\epsilon}\left[ \frac{1}{2\sigma_\tau^2} \left\|\x_r^{[T]}+\frac{1}{\sqrt{\alpha_\tau}}\left(\x_\tau^{[T]}(\x_0^{[T]},\bm\epsilon,\x_c^{[T_c]})-\x_r^{[T]} - \frac{\beta_\tau}{\sqrt{1-\bar\alpha_\tau}}\bm\epsilon \right)-\bm\mu_\theta(\x_\tau^{[T]},\tau)   \right\|^2   \right],\\
        &=\E_{\x_0^{[T]},\bm\epsilon}\Bigg[ \frac{1}{2\sigma_\tau^2} \Bigg\|\x_r^{[T]}+\frac{1}{\sqrt{\alpha_\tau}}\Bigg(\x_\tau^{[T]}-\x_r^{[T]} - \frac{\beta_\tau}{\sqrt{1-\bar\alpha_\tau}}\bm\epsilon \Bigg) \nonumber \\
        & \quad \quad \quad \quad \quad \quad \quad \quad \quad \quad \quad \quad -\x_r^{[T]} -\frac{1}{\sqrt{\alpha_\tau}}\Bigg( \x_\tau^{[T]} -\x_r^{[T]}  -\frac{\beta_\tau}{1-\bar\alpha_\tau} \bm\epsilon_\theta (\x_{\tau}^{[T]},\x_c^{[T_c]},\tau)\Bigg)   \Bigg\|^2   \Bigg],\\
        &=\E_{\x_0^{[T]},\bm\epsilon}\left[\frac{\beta_\tau^2}{2\sigma_\tau^2\alpha_\tau(1-\bar\alpha_\tau)}\|\bm\epsilon- \bm\epsilon_\theta (\x_{\tau}^{[T]},\x_c^{[T_c]},\tau)  \|^2 \right],
    \end{align}
    which is equivalent to Eq.~\ref{eq:app-cond-noise-pred} up to certain re-weighting factors. For  $\gL_\gT=D_\mathrm{KL}(q(\x_\gT^{[T]}|\x_0^{[T]})\|p(\x^{[T]}_\gT))$, it is does not contribute to the gradient since it is irrelevant to $\bm\theta$, and $\x_r$ is also cancelled out in computing the KL, thus stopping the gradient from passing to $\bm\eta$ and $\bm\gamma$.
\end{proof}
Analogous to the unconditional case, we have the following proposition, indicating that if the base distribution is SE$(3)$-equivariant and the transition kernel is SE$(3)$-equivariant, then the marginal is also SE$(3)$-equivariant.

\begin{proposition}
    If the base distribution $p_\gT(\x_{\gT}^{[T]}\mid \x_c^{[T_c]})$ is SE$(3)$-equivariant and the transition kernels $p_{\tau-1}(\x_{\tau-1}^{[T]}\mid \x_{\tau}^{[T]},\x_c^{[T_c]})$ of all diffusion steps $\tau\in\{1,\cdots,\gT\}$ are SE$(3)$-equivariant, then the marginal\footnote{Here the marginal refers to marginalizing the intermediate states in previous diffusion steps, while still being conditional on the input condition $\x_c^{[T_c]}$.} $p_\tau(\x_{\tau}^{[T]}\mid \x_c^{[T_c]})$ at any diffusion step $\tau\in\{0,\cdots,\gT\}$ is SE$(3)$-equivariant.
\end{proposition}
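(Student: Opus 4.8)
The plan is to replay verbatim the induction argument used for the unconditional Proposition, now carrying the conditioning variable $\x_c^{[T_c]}$ through the group action alongside the state variables. Concretely, I will prove by downward induction on the diffusion step $\tau$ the statement that $p_\tau(\x_\tau^{[T]}\mid\x_c^{[T_c]}) = p_\tau(g\cdot\x_\tau^{[T]}\mid g\cdot\x_c^{[T_c]})$ for every $g\in\mathrm{SE}(3)$. The base case is immediate: at $\tau=\gT$ the marginal is exactly the prior $p_\gT(\x_\gT^{[T]}\mid\x_c^{[T_c]})$, which is $\mathrm{SE}(3)$-equivariant by hypothesis (this is the flexible equivariant prior constructed via Eq.~\ref{eq:anchor_frame_translation_constraint}, whose equivariance was already established above).

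\textbf{Induction step.} Assuming $p_\tau(\cdot\mid\cdot)$ is $\mathrm{SE}(3)$-equivariant, I expand $p_{\tau-1}(g\cdot\x_{\tau-1}^{[T]}\mid g\cdot\x_c^{[T_c]})$ using the Chapman--Kolmogorov identity of the reverse process, integrating over the intermediate state $\x_\tau^{[T]}$ against the transition kernel $p_{\tau-1}(g\cdot\x_{\tau-1}^{[T]}\mid\x_\tau^{[T]},\, g\cdot\x_c^{[T_c]})$ and the marginal $p_\tau(\x_\tau^{[T]}\mid g\cdot\x_c^{[T_c]})$. I then perform the change of variables $\x_\tau^{[T]}\mapsto g\cdot\rvy_\tau^{[T]}$; since $g$ acts frame-wise by $\x^{(t)}\mapsto\rmR\x^{(t)}+\rvr$, the linear part is block-diagonal with $\rmR$ blocks, so the Jacobian factor is $|\det g|=(\det\rmR)^{TN}=1$ and Lebesgue measure is preserved. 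Finally I apply the $\mathrm{SE}(3)$-equivariance of the transition kernel to the first factor, $p_{\tau-1}(g\cdot\x_{\tau-1}^{[T]}\mid g\cdot\rvy_\tau^{[T]},\, g\cdot\x_c^{[T_c]})=p_{\tau-1}(\x_{\tau-1}^{[T]}\mid\rvy_\tau^{[T]},\,\x_c^{[T_c]})$, and the induction hypothesis to the second, $p_\tau(g\cdot\rvy_\tau^{[T]}\mid g\cdot\x_c^{[T_c]})=p_\tau(\rvy_\tau^{[T]}\mid\x_c^{[T_c]})$, which collapses the integral back to $p_{\tau-1}(\x_{\tau-1}^{[T]}\mid\x_c^{[T_c]})$. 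Running the induction down to $\tau=1$ yields equivariance of $p_0(\cdot\mid\cdot)$, the desired conclusion.

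\textbf{Where the care is needed.} The computation itself is bookkeeping; the genuine points to get right are two. First, the transition kernel's equivariance must be read \emph{jointly} in $(\x_\tau^{[T]},\x_c^{[T_c]})$: applying $g$ simultaneously to $\x_{\tau-1}^{[T]},\x_\tau^{[T]},\x_c^{[T_c]}$ must leave the density unchanged. This is exactly what the parameterization $\bm\mu_\theta(\x_\tau^{[T]},\x_c^{[T_c]},\tau)=\x_r^{[T]}+\tfrac{1}{\sqrt{\alpha_\tau}}\bigl(\x_\tau^{[T]}-\x_r^{[T]}-\tfrac{\beta_\tau}{\sqrt{1-\bar\alpha_\tau}}\bm\epsilon_\theta(\x_\tau^{[T]},\x_c^{[T_c]},\tau)\bigr)$ with $\x_r^{[T]}=\mathrm{EquiPrior}(\x_c^{[T_c]})$ equivariant and $\bm\epsilon_\theta$ chosen $\mathrm{SO}(3)$-equivariant and translation-invariant provides, together with the isotropic covariance $\sigma_\tau^2\rmI$ (the same rotation-invariance manipulation of the Gaussian exponent as used for the prior applies verbatim). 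Second, $\mathrm{SE}(3)$ acts with unit Jacobian, so the pushforward step is legitimate. I would also add a remark that, unlike the unconditional case, no normalizability pathology appears here: the prior $\gN(\x_r^{[T]},\rmI)$ is a genuine probability density and the property being propagated is \emph{equivariance}, not \emph{invariance}, so the $p\equiv 0$ contradiction of the translation-invariant setting does not arise, and the zero-centering device of the earlier footnote is needed only for the unconditional analysis.
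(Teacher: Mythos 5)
Your proposal is correct and follows essentially the same route as the paper's own proof: downward induction starting from the equivariant prior at $\tau=\gT$, with the induction step given by expanding the marginal through the reverse transition kernel, applying the change of variables $\x_\tau^{[T]}\mapsto g\cdot\rvy_\tau^{[T]}$ with unit Jacobian, and invoking joint equivariance of the kernel together with the induction hypothesis. The additional remarks on the kernel parameterization and on why no normalizability issue arises in the conditional case are consistent with the paper's surrounding discussion and do not change the argument.
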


\begin{proof}
    The proof is similarly given by induction.

    \textbf{Induction base.} When $\tau=\gT$, the distribution is the base distribution $p_\gT(\x_{\gT}^{[T]}\mid \x_c^{[T_c]})$ is SE$(3)$-equivariant, as it is designed.

    \textbf{Induction step.} Suppose the marginal at diffusion step $\tau$, \ie, $p_\tau(\x_{\tau}^{[T]}\mid \x_c^{[T_c]})$, is SE$(3)$-equivariant, then we have
    \begin{align}
        &p_{\tau-1}(\rmR\x_{\tau-1}^{[T]}+\rvr\mid \rmR\x_c^{[T_c]}+\rvr)\\
        = &\int p_{\tau-1}(\rmR\x_{\tau-1}^{[T]}+\rvr\mid \x_\tau^{[T]}, \rmR\x_c^{[T_c]}+\rvr) p_{\tau}(\x_{\tau}^{[T]}\mid \rmR\x_c^{[T_c]}+\rvr)\dd \x_\tau^{[T]},\\
        = &\int p_{\tau-1}(\rmR\x_{\tau-1}^{[T]}+\rvr\mid \rmR (\rmR^{-1}(\x_\tau^{[T]}-\rvr))  +\rvr  , \rmR\x_c^{[T_c]}+\rvr) p_{\tau}( \rmR (\rmR^{-1}(\x_\tau^{[T]}-\rvr))  +\rvr  \mid \rmR\x_c^{[T_c]}+\rvr)\dd \x_\tau^{[T]},\\
        =&\int p_{\tau-1}(\x_{\tau-1}^{[T]}\mid \rmR^{-1}(\x_\tau^{[T]}-\rvr),\x_c^{[T_c]})p_\tau(\rmR^{-1}(\x_\tau^{[T]}-\rvr)\mid\x_c^{[T_c]})\dd \x_\tau^{[T]},\\
      =&\int p_{\tau-1}(\x_{\tau-1}^{[T]}\mid \rvy_\tau^{[T]},\x_c^{[T_c]})p_\tau(\rvy_\tau^{[T]}\mid\x_c^{[T_c]}) \det(\rmR)\dd \rvy_\tau^{[T]},\\
      =&p_{\tau-1}(\x_{\tau-1}^{[T]}\mid\x_c^{[T_c]}),
    \end{align}
    which concludes the proof.
\end{proof}

\subsection{Optimizable Equivariant Prior}

\begin{theorem}
\label{theorem:prior_subsumes}
    The prior implemented by the parameterization in Eq.~\ref{eq:anchor_frame_translation_constraint},~\ref{eq:all_W}, and~\ref{eq:each_W} subsumes CoM-based priors and fixed point-wise priors.
\end{theorem}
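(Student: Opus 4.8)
The plan is to prove the claim by \textbf{explicit construction}: for each target family of priors I would exhibit a choice of the parameters $\bm\eta$ (hence of the intermediate quantities $\hat\x_c^{[T_c]}$ and $\hat\h_c^{[T_c]}$ produced by $\rvf_\eta$) together with a choice of $\bm\gamma$ for which Eqs.~\ref{eq:anchor_frame_translation_constraint},~\ref{eq:all_W},~\ref{eq:each_W} collapse to the desired closed form. The two target families are: (i) the CoM-based prior $\x_r^{[T]}=\bm{1}_{T\times N}\otimes\sum_{s\in[T_c]}w^{(s)}\bar\x_c^{(s)}$ with $\bar\x_c^{(s)}=\mathrm{CoM}(\x_c^{(s)})$ and $\sum_{s} w^{(s)}=1$, as in~\cite{hoogeboom2022equivariant,guan2023d}; and (ii) the fixed point-wise prior $\x_r^{(t)}=\sum_{s\in[T_c]}w^{(s)}\x_c^{(s)}$ with the same fixed scalar weights $w^{(s)}$ across all nodes and $\sum_s w^{(s)}=1$, which in particular contains the last-frame prior $\x_r^{(t)}=\x_c^{(T_c-1)}$ used in Table~\ref{tab:ablation}.

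\textbf{Coordinate branch.} I would first show that $\rvf_\eta$ can realize two particular maps on coordinates. (a) The \emph{identity} $\hat\x_c^{(s)}=\x_c^{(s)}$: make every coordinate-update term in the two EGTN blocks vanish, i.e.\ choose the scalar MLPs $\varphi_\x$ in the EGCL coordinate update and in the temporal update (Eq.~\ref{eq:temp-attn-x}) to output $0$; then $\x'^{(t)}=\x^{(t)}$ throughout. (b) The \emph{per-frame center of mass} $\hat\x_c^{(s)}_i=\bar\x_c^{(s)}$ for all $i$: in the first EGCL layer take the coordinate-update coefficient to be the constant $-1/N$ on every edge of the (complete) geometric graph, so that $\x_i^{(t)}\mapsto\x_i^{(t)}-\tfrac1N\sum_{j\neq i}(\x_i^{(t)}-\x_j^{(t)})=\tfrac1N\sum_{j}\x_j^{(t)}$, and let all remaining sublayers act as the identity as in (a). Both maps are SE$(3)$-equivariant, so by Proposition~\ref{theorem:mean-equivariant-distribution-equivariant} the induced prior stays SE$(3)$-equivariant.

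\textbf{Weight branch.} Next I would show that the weights can be driven to $\rvw^{(t,s)}=w^{(s)}\bm{1}_N$ for any prescribed $(w^{(s)})_{s\in[T_c]}$ with $\sum_s w^{(s)}=1$, independently of $t$ and of the node index. Since $\bm\gamma\in\sR^{T}$ is a free learnable vector and the map $\hat\h_c^{[T_c]}\mapsto\rmW$ can be made constant (zero the relevant linear weights and use biases), one can arrange $\rmW_{t,s}=w^{(s)}\bm{1}_N$ for $s<T_c-1$; then Eq.~\ref{eq:each_W} gives automatically
\begin{align}
\rvw^{(t,T_c-1)}=\bm{1}_N-\sum_{s=0}^{T_c-2}w^{(s)}\bm{1}_N=\Big(1-\sum_{s=0}^{T_c-2}w^{(s)}\Big)\bm{1}_N=w^{(T_c-1)}\bm{1}_N,
\end{align}
using $\sum_s w^{(s)}=1$. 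Substituting into Eq.~\ref{eq:anchor_frame_translation_constraint} node-wise, $(\x_r^{(t)})_i=\sum_{s\in[T_c]}w^{(s)}(\hat\x_c^{(s)})_i$: with branch (a) this is precisely the fixed point-wise prior, and with branch (b) it equals $\sum_s w^{(s)}\bar\x_c^{(s)}$ for every $i$, i.e.\ the CoM-based prior. This yields the asserted inclusion.

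\textbf{Main obstacle.} The only nonroutine step is justifying that the \emph{fixed} lightweight two-layer EGTN $\rvf_\eta$ can exactly realize the maps in (a) and (b). This rests on the particular form of the EGCL and temporal coordinate updates (a sum of relative-position vectors reweighted by learned scalars, which vanishes when the scalars are $0$ and reproduces mean pooling when they equal the negative constant $-1/N$) and, for (b), on the graph being connected enough that averaging over all nodes is expressible — immediate for the complete graphs used in the N-body and MD settings, and otherwise to be read as pooling within connected components. A secondary subtlety is matching the exact signature of the $\bm\gamma\otimes\hat\h_c^{[T_c]}$ construction so that $\rmW_{t,s}$ can be made $t$-independent; this is exactly where the freedom of the learnable vector $\bm\gamma$ is used.
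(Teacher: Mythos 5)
Your proposal is correct and follows essentially the same route as the paper's proof: an explicit construction in which $\bm\gamma$, $\hat\h_c^{[T_c]}$, and $\hat\x_c^{[T_c]}$ are driven to specific values so that Eqs.~\ref{eq:anchor_frame_translation_constraint}--\ref{eq:each_W} collapse to the target priors (the paper takes $\hat\x_c^{(s)}=\mathrm{CoM}(\x_c^{(s)})$, $\hat\h_c^{(s)}=\frac{1}{T_c}\bm{1}_N$, $\gamma^{(t)}=1$ for the CoM case, and $\hat\x_c^{(s)}=\x_c^{(s)}$ with a one-hot $\hat\h_c^{[T_c]}$ for the point-wise case). Your version is marginally more general (arbitrary fixed weights $w^{(s)}$ with $\sum_{s}w^{(s)}=1$ rather than the uniform-average and single-frame instances) and adds an explicit expressivity check that the two-layer EGTN can realize the identity and per-frame-CoM coordinate maps, a step the paper leaves implicit by simply positing the reduced values of $\hat\x_c^{[T_c]}$ and $\hat\h_c^{[T_c]}$.
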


\begin{proof}
    We repeat the parameterizations specified by Eq.~\ref{eq:anchor_frame_translation_constraint},~\ref{eq:all_W}, and~\ref{eq:each_W} below for better readability.
    \begin{align}
    \x_r^{(t)}= \sum_{s\in[T_c]}\rvw^{(t,s)}\hat\x_c^{(s)},\quad\text{s.t.} \sum_{s\in[T_c]}\rvw^{(t,s)}=\bm{1}_N,
\end{align}
\begin{align}
    \rmW_{t,s}&=[ \bm\gamma \otimes \hat\h_c^{[T_c]}  ]_{t,s}\in\sR^N,\\
    \rvw^{(t,s)}&=\begin{cases}
    \rmW_{t,s} & \text{$s < T_c-1,$} \\
    \bm{1}_N-\sum_{s=0}^{T_c-2}\rmW_{t,s} & \text{$s=T_c-1.$}
  \end{cases}
  \end{align}
  We first show $\x_r^{[T]}$ can reduce to the CoM-based priors. Let $\hat\x_c^{(s)}=\mathrm{CoM}(\x_c^{(s)})$, $\hat\h_c^{(s)}=\frac{1}{T_c}\bm{1}_N$, $\gamma^{(t)}=1$. In this case,
  \begin{align}
      \x_r^{(t)}&=\sum_{s\in [T_c]}\rvw^{(t,s)}\hat\x_c^{(s)},\\
      &=\sum_{s\in [T_c-1]}\gamma^{(t)}\frac{1}{T_c}\bm{1}_N\mathrm{CoM}(\x_c^{(s)}) + (\bm{1}_N-\sum_{s\in[T_c-1]}\gamma^{(t)}\frac{1}{T_c}\bm{1}_N)\mathrm{CoM}(\x_c^{(T_c-1)}),\\
      &= \sum_{s\in [T_c-1]}\frac{1}{T_c}\bm{1}_N\mathrm{CoM}(\x_c^{(s)}) +  (\bm{1}_N-\frac{T_c-1}{T_c}\bm{1}_N )\mathrm{CoM}(\x_c^{(T_c-1)}) ,\\
      &= \sum_{s\in [T_c-1]}\frac{1}{T_c}\bm{1}_N\mathrm{CoM}(\x_c^{(s)})+\frac{1}{T_c}\bm{1}_N\mathrm{CoM}(\x_c^{(T_c-1)}),\\
      &=\frac{1}{T_c} \sum_{s\in[T_c]}\mathrm{CoM}(\x_c^{(s)}),
  \end{align}
  where $ \frac{1}{T_c}\sum_{s\in[T_c]}\text{CoM}(\x_c^{(s)})$ is the generalization of the CoM-based priors~\cite{hoogeboom2022equivariant,guan2023d} in the multiple frame conditioning scenario, which reduces to $\mathrm{CoM}(\x_c^{(0)})$ when $T_c=1$.

  To show $\x_r^{[T]}$ can reduce to fixed point-wise priors is straightforward. Let $\hat\x_c^{(s)}=\x_c^{(s)}$, $\hat\h^{[T_c]}=\mathrm{Onehot}(s^\ast)\bm{1}_{T_c\times N}$ and $\gamma^{(t)}=1, \forall t$. Then $\rvw^{(t,s)}=\mathrm{Onehot}(s^\ast)\bm{1}_{T_c\times N}$. Therefore,
  \begin{align}
      \x_r^{(t)}&=\sum_{s\in[T_c]}\rvw^{(t,s)}\x_c^{(s)},\\
      &=\sum_{s\in[T_c]}\mathrm{Onehot}(s^\ast)\bm{1}_{T_c\times N}\x_c^{(s)},\\
      &=\x_c^{(s^\ast)},
  \end{align}
  where $\x_c^{(s^\ast)}$ is the point-wise equivariant prior, and $s^\ast \in [T_c]$ is the frame index in the conditioning trajectory for this specific prior.
\end{proof}

We also provide an illustrative comparison of these equivariant priors in Fig.~\ref{fig:priors}.

\begin{figure}[htbp]
    \centering
    \includegraphics[width=0.95\linewidth]{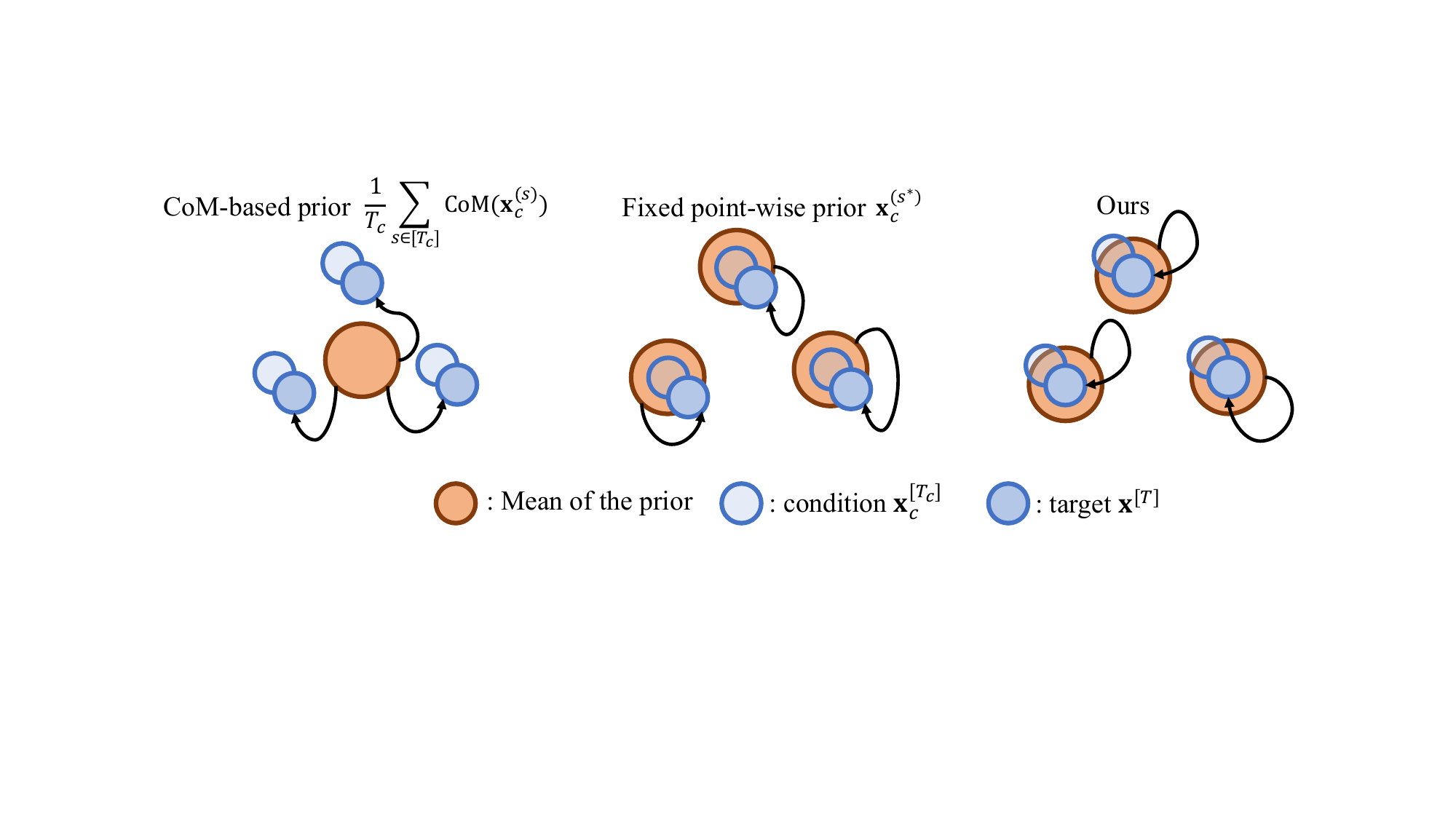}
    \caption{An illustration of different equivariant priors. For simplicity in the chart here we only illustrate the case when $N=3$ and $T_c=1$, $T=1$.}
    \label{fig:priors}
\end{figure}

\subsection{Proof of Theorem~\ref{theorem:egtn}}
\label{sec:proof-egtn}

\begingroup
\def\thetheorem{\ref{theorem:egtn}}
\begin{theorem}
[$\text{SE}(3)$-equivariance of EGTN]
 Let $\x'^{[T]},\h'^{[T]}=f_\mathrm{EGTN}\left(\x^{[T]},\h^{[T]},\gE\right)$. Then we have $g\cdot\x'^{[T]},\h'^{[T]}=f_\mathrm{EGTN}\left(g\cdot\x^{[T]},\h^{[T]},\gE\right), \forall g\in\text{SE}(3)$.
\end{theorem}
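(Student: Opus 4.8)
The plan is to prove the claim by structural induction on the layers of $f_{\mathrm{EGTN}}$, reducing SE$(3)$-equivariance to the two generating cases — equivariance under a rotation $\rmR$ and under a translation $\rvr$ — since any $g\in\mathrm{SE}(3)$ factors as a composition of these and equivariance is closed under composition. The inductive invariant I would carry along is: at the output of every intermediate layer the updated node features $\h^{[T]}$ are SE$(3)$-invariant while the updated coordinates $\x^{[T]}$ are SE$(3)$-equivariant, i.e. the pair transforms as $(g\cdot\x^{[T]},\h^{[T]})$ whenever the input does. The base case (zero layers) is the identity map and the invariant holds trivially.

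For the inductive step it suffices to check that each of the two building blocks preserves the invariant. For the per-frame EGCL this is exactly the equivariance of the Equivariant Graph Convolution Layer of~\cite{satorras2021en}: since it is applied independently to each frame $t\in[T]$, transforming every frame by the same $g$ transforms each output frame by $g$ and leaves the node features unchanged. For the temporal attention block I would verify the invariant directly from Eqs.~\ref{eq:temp-attn-a}--\ref{eq:temp-attn-x}. The queries $\rvq^{(t)}=\varphi_{\rvq}(\h^{(t)})$, keys $\rvk^{(t,s)}=\varphi_{\rvk}(\h^{(s)})+\psi(t-s)$ and values $\rvv^{(t,s)}=\varphi_{\rvv}(\h^{(s)})+\psi(t-s)$ are functions of the invariant node features and of the time indices only — in particular $\psi(t-s)$ never sees the coordinates — so they and the resulting weights $\rva^{(t,s)}$ in Eq.~\ref{eq:temp-attn-a} are SE$(3)$-invariant. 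Hence the node update Eq.~\ref{eq:temp-attn-h} is a sum of invariants, so $\h'^{(t)}$ stays invariant. For the coordinate update Eq.~\ref{eq:temp-attn-x}, under $\x^{(t)}\mapsto\rmR\x^{(t)}+\rvr$ the pairwise differences satisfy $\x^{(t)}-\x^{(s)}\mapsto\rmR(\x^{(t)}-\x^{(s)})$ (translation cancels, the difference rotates as a vector), while $\rva^{(t,s)}$ and the \emph{scalar} gate $\varphi_{\x}(\rvv^{(t,s)})$ are invariant; therefore the summation term transforms by $\rmR$ and $\x'^{(t)}=\x^{(t)}+(\cdots)\mapsto\rmR\x'^{(t)}+\rvr$, giving equivariance. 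The point that $\varphi_{\x}$ outputs a scalar is exactly what prevents the direction $\x^{(t)}-\x^{(s)}$ from being scrambled under rotation.

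Composing, any alternating stack of these blocks preserves the invariant, so the full network satisfies $g\cdot\x'^{[T]},\h'^{[T]}=f_{\mathrm{EGTN}}(g\cdot\x^{[T]},\h^{[T]},\gE)$ for all $g\in\mathrm{SE}(3)$, which is the claim. The main obstacle is not conceptual but one of bookkeeping: one must track that the node-feature channel stays invariant through \emph{every} layer, so that all scalar quantities ($\rva^{(t,s)}$, $\varphi_{\x}(\rvv^{(t,s)})$) feeding the coordinate update are genuinely invariant, and that the coordinate update is written purely in terms of relative positions — were any absolute coordinate to leak into a scalar-valued MLP, rotation equivariance would fail. I would also note in passing that the equivariant cross-attention variant is covered by the identical argument, since the additional terms are again scalar-weighted relative differences $\x^{(t)}-\x_c^{(s)}$ with weights built only from invariant features.
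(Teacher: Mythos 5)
Your proof is correct and follows essentially the same route as the paper's: reduce to the two building blocks, cite the EGCL equivariance of~\cite{satorras2021en} for the per-frame spatial layers, and verify directly that the temporal attention has invariant queries/keys/values and coefficients (so $\h'$ stays invariant) while the coordinate update, built from scalar-gated relative differences $\x^{(t)}-\x^{(s)}$, transforms as $\rmR\x'^{(t)}+\rvr$, with closure under composition giving the full-network claim.
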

\addtocounter{theorem}{-1}
\endgroup

\begin{proof}
    $f_\mathrm{EGTN}$ is a stack of $L$ EGNN and temporal attention layer in alternated fashion, formally written as $f_\mathrm{EGTN}=\underbrace{f_\mathrm{attn}\circ f_\mathrm{EGNN}\circ\cdots\circ f_\mathrm{attn}\circ f_\mathrm{EGNN}}_{L\times (f_\mathrm{attn}\circ f_\mathrm{EGNN})}$. Since the chain of SE$(3)$-equivariant function is also SE$(3)$-equivariant, it suffices to prove $f_\mathrm{attn}$ is SE$(3)$-equivariant, in that the SE$(3)$-equivariance of EGNN directly follows~\cite{satorras2021en}.

    It is directly verified that the attention coefficients $\rva^{(t,s)}\in\sR$ in Eq.~\ref{eq:temp-attn-a} and the query $\rvq^{[T]}$, key $\rvk^{[T]}$, and value $\rvv^{[T]}$ are all SE$(3)$-invariant, since they are  derived based on the SE$(3)$-invariant input $\h^{[T]}$. This directly leads to the SE$(3)$-invariance of the updated node feature $\h'^{[T]}$. For the updated coordinates,
    \begin{align}
\x_\mathrm{tr}'^{(t)}&=\x_\mathrm{tr}^{(t)}+\sum\nolimits_{s\in[T]}\rva_\mathrm{tr}^{(t,s)}\varphi_{\x}(\rvv_\mathrm{tr}^{(t,s)})(\x_\mathrm{tr}^{(t)}-\x_\mathrm{tr}^{(s)}),\\
&=\rmR\x^{(t)}+\rvr  + \sum\nolimits_{s\in[T]}\rva^{(t,s)}\varphi_\x (\rvv^{(t,s)}) (\rmR\x^{(t)}+\rvr - \rmR\x^{(s)}-\rvr),\\
&=\rmR\x^{(t)}+\rvr + \rmR  \left(  \sum\nolimits_{s\in[T]}\rva^{(t,s)}\varphi_\x (\rvv^{(t,s)}) (\x^{(t)}-\x^{(s)}) \right),\\
&=\rmR \left( \x^{(t)}+ \sum\nolimits_{s\in[T]}\rva^{(t,s)}\varphi_\x (\rvv^{(t,s)}) (\x^{(t)}-\x^{(s)}) \right) + \rvr,\\
&=\rmR\x'^{(t)}+\rvr,
\end{align}
where the variables with subscript $\mathrm{tr}$ refers to their transformed counterparts when the input $\x^{[T]}$ is transformed into $\rmR\x^{[T]}+\rvr$. Thus it completes the proof of SE$(3)$-equivariance of the temporal attention $f_\mathrm{attn}$ and hence the entire $f_\mathrm{EGTN}$.
    
\end{proof}

\section{More Details on Experiments}
\label{sec:more_exp_details}

\subsection{Compute Resources}
We use Distributed Data Parallel on 4 Nvidia A6000 GPUs to train all the models. The training on NBody and ETH-UCY take around 12 hours while each MD17 training phase takes about a day. Our CPUs were standard intel CPUs.

\subsection{Hyper-parameters}
\label{sec:hyper-params}
We provide the detailed hyper-parameters of~\method in Table~\ref{tab:hyperparam}. We adopt Adam optimizer with betas $(0.9, 0.999)$ and $\epsilon=10^{-8}$. For all experiments, we use the linear noise schedule per~\cite{ho2020denoising} with $\beta_\mathrm{start}=0.02$ and $\beta_\mathrm{end}=0.0001$.

\begin{table}[htbp]
  \centering
  \caption{Hyper-parameters of~\method in the experiments.}
    \begin{tabular}{lcccccc}
    \toprule
          & \texttt{n\_layer} & \texttt{hidden} & \texttt{time\_emb\_dim} & $\gT$ & \texttt{batch\_size} &\texttt{learning\_rate} \\
    \midrule
    N-body &  6     &   128    &  32     &  1000     &  128     & 0.0001 \\
    MD    &   6    &    128   &    32   &  1000     &  128     & 0.0001 \\
    ETH   &    4   &   64    &   32    &   100    &   100    &  0.0005 \\
    \bottomrule
    \end{tabular}%
  \label{tab:hyperparam}%
\end{table}%

\subsection{Baselines}
\label{sec:baselines}

For the frame-to-frame prediction models, including RF~\cite{kohler2019equivariant}, EGNN~\cite{satorras2021en}, TFN~\cite{thomas2018tensor}, and SE$(3)$-Transformer~\cite{fuchs2020se}, we adopt the implementation in the codebase maintained by~\cite{satorras2021en}. To yield a strong comparison, instead of taking one frame as input to directly predict the final frame, we employ a discretized NeuralODE~\cite{chen2018neural}-style training and inference procedure. In particular, we train the models with position $\rvx^{(t)}$ and velocity (computed as the difference of the current and previous frame, \ie, $\rvv^{(t)}=\rvx^{(t)}-\rvx^{(t-1)}$) as input to predict the next velocity $\hat\rvv^{(t+1)}$. The position for the next step is integrated as $\hat\rvx^{(t+1)}=\rvx^{(t)}+\hat\rvv^{(t+1)}$. The training loss is computed as the Mean Squared Error (MSE) between the predicted position $\hat\rvx^{(t+1)}$ and the ground truth position $\rvx_\mathrm{gt}^{(t+1)}$. In inference time, a roll-out prediction is conducted, which iteratively predict the next step by feeding the predicted position and velocity at the current step, for a total of $T$ steps. We follow the hyper-parameter tuning guideline for these baselines by~\cite{han2022equivariant} which conduct a random search over the space spanned by the number of layers in $\{2,4,6,8\}$, the hidden dimension $\{32,64,128\}$, learning rate $\{5e-3,1e-3,5e-4,1e-4\}$, and batch size $32, 64, 128, 256$, and select the model with best performance. All models are trained towards convergence with an early-stopping counter of 5, with validation performed every 20 epochs.

For EqMotion, we directly adopt the code by~\cite{xu2023eqmotion} and their suggested hyper-parameters for the N-body datasets and MD17 datasets.

For SVAE~\cite{xu2022socialvae} and SGAN~\cite{gupta2018social}, these methods are originally developed for the pedestrian trajectory forecasting task. The backbone model that processes the input trajectory consists of social pooling operation and GRU or LSTM blocks for temporal processing. In order the make them favorable in tackling geometric systems which additionally include node features and edge features, we replace the social pooling operations by MPNNs~\cite{gilmer2017neural} in both encoder (or discriminator) and decoder (or generator) to synthesize the information on the geometric graph. The temporal module is still kept as GRU for SVAE and LSTM for SGAN, following their original implementations. We also search over the best hyper-parameters which additionally involve the KL-divergence weight in $\{1, 0.1, 0.01, 0.001\}$ for SVAE according to the validation ELBO.
For EGVAE, we replace the MPNNs in SVAE by EGNN~\cite{satorras2021en}, and restructured the latent space of the prior with both equivariant and invariant latent features. By this means, EGVAE is also guaranteed to model an equivariant distribution in the conditional case and an invariant distribution in the unconditional case.

\subsection{Model}
In detail, the EGCL layer~\cite{satorras2021en} is given by:
\begin{align}
    \rvm_{ij}&=\varphi_\rvm\left(\h_i,\h_j,\|\x_i-\x_j\|,\rve_{ij}\right),\\
    \h'_i&=\varphi_\h\left(\h_i,\sum_{j\in\gN(i)}\rvm_{ij}\right),\\
    \x'_i&=\x_i+\sum_{j\in\gN(i)}\varphi_\x\left(\rvm_{ij}\right)\left(\x_i-\x_j\right),
\end{align}
where $\varphi_\rvm$, $\varphi_\h$, and $\varphi_\x$ are all MLPs. We also provide a schematic of our proposed EGTN in Fig.~\ref{fig:model_arch} for better illustration.

\begin{figure}
    \centering
    \includegraphics[width=0.95\linewidth]{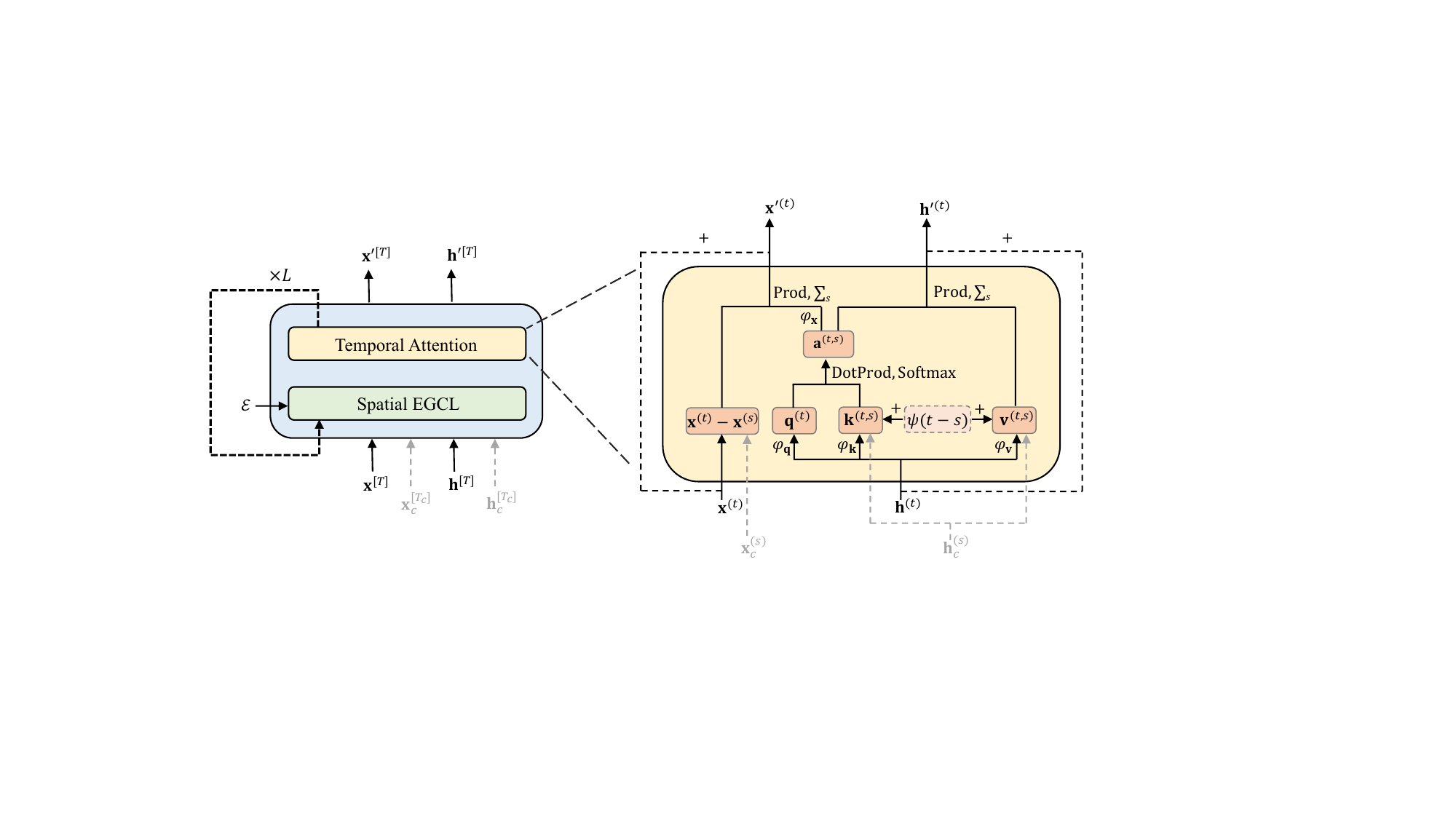}
    \caption{Schematic of the proposed EGTN, which alternates the EGCL layer for extracting spatial interactions and the temporal attention layer for modeling temporal sequence. Additional conditional information $\x_c^{[T_c]}$ and $\h_c^{[T_c]}$ can also be processed using cross-attention. The relative temporal embedding $\psi(t-s)$ is added to the key and value. DotProd refers to dot product and Softmax is performed over indexes of $s$.}
    \label{fig:model_arch}
\end{figure}

\subsection{Evaluation Metrics in the Unconditional Case}
All these metrics are evaluated on a set of model samples with the same size as the testing set.

\textbf{Marginal score} is computed as the absolute difference of two empirical probability density functions. Practically, we collect the $x,y,z$ coordinates at each time step marginalized over all nodes in all systems in the predictions and the ground truth (testing set). Then we split the collection into 50 bins and compute the MAE in each bin, finally averaged across all time steps to obtain the score. Note that on MD17, instead of computing the pdf on coordinates, we compute the pdf on the length of the chemical bonds, which is a clearer signal that correlates to the validity of the generated MD trajectory, since during MD simulation the bond lengths are usually stable with very small vibrations. Marginal score gives a broad statistical measurement how each dimension of the generated samples align with the original data.

\textbf{Classification score} is computed as the cross-entropy loss of a sequence classification model that aims to distinguish whether the trajectory is generated by the model or from the testing set. To be specific, we construct a dataset mixed by the generated samples and the testing set, and randomly split it into 80\% and 20\% subsets for training and testing. Then the model is trained on the training set and the classification score is computed as the cross-entropy on the testing set. We use a 1-layer EqMotion with a classification head as the model. The classification score provided intuition on how difficult it is to distinguish the generated samples and the original data.

\textbf{Prediction score} is computed as the MSE loss of a train-on-synthetic-test-on-real sequence to sequence model. In detail, we train a 1-layer EqMotion  on the sampled dataset with the task of predicting the second half of the trajectory given the first half. We then evaluate the model on the testing set and report the MSE as the prediction score. Prediction score provides intuition on the capability of the generative model on generating synthetic data that well aligns with the ground truth.

\section{More Experiments and Discussions}

\subsection{Model Composition for Longer Trajectory}
Since attention is utilized to extract temporal information, the time complexity scales quadratically with the length of the input trajectory, both during training and inference. In practice, we can instead train models on shorter trajectories and compose them during inference for longer trajectories, in both unconditional and conditional cases. For target trajectories with length $T$, we can first decompose it into $K$ several equal-length\footnote{In fact they do not necessarily need to be equal-length. Here we make such assumption for conciseness of the presentation.} non-overlapping intervals with time span $\Delta T$. Then, for the unconditional case, we have
\begin{align}
    p(\x^{[T]})=p(\x^{[\Delta T]})\prod_{k=1}^{K-1}p(\x^{k\Delta T+[\Delta T]}\mid \x^{(k-1)\Delta T+[\Delta T]}),
\end{align}
by assuming mild conditional independence, where $p(\x^{[\Delta T]})$ is an unconditional model for trajectory with length $\Delta T$, and $p(\x^{k\Delta T+[\Delta T]}\mid \x^{(k-1)\Delta T+[\Delta T]})$ can be learned by a conditional model for short trajectories. The conditional case directly follows by factorizing into products of conditional distribution over shorter trajectories.

We provide a demonstration of such technique as gifs in the \textbf{supplementary file}.

\begin{table*}[htbp]
  \centering
  \small
  \caption{The effect of diffusion steps in the unconditional generation setting (top) and conditional forecasting setting (bottom).}
   \resizebox{1\textwidth}{!}{
    \begin{tabular}{lccc|ccc}
    \toprule
          & \multicolumn{3}{c|}{Aspirin} & \multicolumn{3}{c}{Charged Particle} \\
          & Marginal $\downarrow$  & Classification $\uparrow$ & Prediction $\downarrow$  & Marginal $\downarrow$  & Classification $\uparrow$ & Prediction $\downarrow$ \\
    \midrule
    $\gT=100$ & 0.808 & 0.0242 & 0.0243 & 0.0065 & 0.170  & 0.0118 \\
    $\gT=1000$ & 0.726      & 0.0348      &  0.0212     &  0.0055     &  0.556     & 0.00978 \\
    \midrule
    \midrule
      & \multicolumn{3}{c|}{Aspirin} & \multicolumn{3}{c}{Charged Particle} \\
          & ADE $\downarrow$   & FDE $\downarrow$   & NLL $\downarrow$   & ADE $\downarrow$   & FDE $\downarrow$   & NLL $\downarrow$ \\
    \midrule
    $\gT=100$ &  0.110     &  0.198     &  -2125.7     & 0.120      &  0.280     & -547.5 \\
    $\gT=1000$ & 0.107      &  0.193     &  -3461.4     & 0.110      &  0.258     & -982.7 \\
    \bottomrule
    \end{tabular}%
    }
  \label{tab:uncond_diffstep}%
\end{table*}%

\subsection{Number of Diffusion Steps}
We provide results in the unconditional generation setting for $\gT=100$. The results are in Table~\ref{tab:uncond_diffstep}. Compared with the conditional setting, the unconditional generation is more challenging in that is needs to generate trajectories without any given reference geometries. We observe a drop in performance when $\gT$ is decreased from 1000 to 100. However, the performance with only 100 diffusion steps is still significantly better than SVAE.

\subsection{Sampling Time}
\label{sec:sampling_time}

In the table below we display the generation metrics and the inference time per batch with batch size 128 on MD17 Aspirin molecule. We compare GeoTDM with EGVAE, an autoregressive VAE-based method with EGNN as the backbone. Here GeoTDM-100 and GeoTDM-1000 refer to GeoTDM using 100 and 1000 diffusion steps, respectively.

\begin{table}[htbp]
  \centering
  \small
  \caption{Sampling runtime comparison on MD17 Aspirin molecule.}
    \begin{tabular}{lcccc}
    \toprule
          & Marginal & Classification & Prediction & Time per batch \\
    \midrule
    EGVAE & 2.650 & 1.31$\times 10^{-4}$  & 0.0386 & 0.6$\pm$0.1 \\
    GeoTDM-100 & 0.808 & 2.42$\times 10^{-2}$  & 0.0243 & 7.9$\pm$0.8 \\
    GeoTDM-1000 & 0.726 & 3.48$\times 10^{-2}$  & 0.0212 & 74.2$\pm$2.1 \\
    \bottomrule
    \end{tabular}%
  \label{tab:addlabel}%
\end{table}%

We observe that GeoTDM-100 is approximately 10 times slower than EGVAE, since the model requires 100 calls of the denoising network to generate one batch, while EGVAE consumes the same number of calls as the length of the trajectory (20 in this case) due to autoregressive modeling. Although GeoTDM is slower, the gain in performance is significant and the quality of the generated trajectory is remarkably better than that of EGVAE. When further increasing the number of diffusion steps to 1000, the performance becomes better while requiring much more compute.

However, it is worth noticing that all these deep learning-based methods are significantly faster than traditional methods like DFT, which typically requires hours to even several days to converge depending on the scale of the system, according to OCP~\cite{chanussot2021open}. Therefore, although GeoTDM becomes slower than VAEs when using larger number of diffusion steps, it is still much faster than DFT, which indicates its practical value in generating geometric trajectories like molecular dynamics simulation.

The computation overhead of diffusion models compared with VAEs or GANs has been a well-known issue. We recognize enhancing the efficiency of GeoTDM as an interesting direction of future work, potentially through adopting faster solvers like DDIM~\cite{song2020denoising} or DPMSolver~\cite{lu2022dpm}, performing consistency distillation~\cite{song2023consistency}, or developing latent diffusion models~\cite{xu2023geometric} that take advantage of a more compact representation of the spatio-temporal geometric space.

\subsection{Standard Deviations}
\label{sec:std}
We provide the standard deviations in Table~\ref{tab:std_N-body} and~\ref{tab:std_md17}.
 
\begin{table}[htbp]
  \centering
  \small
  \caption{Conditional generation results of~\method on N-body charged particle, spring, and gravity. Results (mean $\pm$ standard deviation) are computed from 5 samples.}
    \resizebox{1\textwidth}{!}{
    \begin{tabular}{ccccccc}
    \toprule
         &   \multicolumn{2}{c}{Particle} & \multicolumn{2}{c}{Spring} & \multicolumn{2}{c}{Gravity} \\
         \cmidrule(lr){2-3}\cmidrule(lr){4-5}\cmidrule(lr){6-7}
       &     ADE   & FDE   & ADE   & FDE   & ADE   & FDE \\
    \midrule
   GeoTDM &  0.110$\pm$0.014    &  0.258$\pm$0.032     &  0.0030$\pm$0.0004     &  0.0079$\pm$0.0010     &   0.256$\pm$0.015    & 0.613$\pm$0.034\\
   SVAE &  0.378$\pm$0.005    &  0.732$\pm$0.005     &  0.0120$\pm$0.0003     &  0.0209$\pm$0.0004    &   0.582$\pm$0.007    & 1.101$\pm$0.015\\
    \bottomrule
    \end{tabular}%
    }
  \label{tab:std_N-body}%
\end{table}%

\begin{table}[htbp]
  \centering
  \small
  \caption{Conditional generation results of~\method on MD17. Results (mean $\pm$ standard deviation) are computed from 5 samples.}
    \resizebox{1\textwidth}{!}{
    \begin{tabular}{rrrrrrrr}
    \toprule
    \multicolumn{2}{c}{Aspirin} & \multicolumn{2}{c}{Benzene} & \multicolumn{2}{c}{Ethanol} & \multicolumn{2}{c}{Malonaldehyde} \\
    \cmidrule(lr){1-2}\cmidrule(lr){3-4}\cmidrule(lr){5-6}\cmidrule(lr){7-8}
    \multicolumn{1}{c}{ADE} & \multicolumn{1}{c}{FDE} & \multicolumn{1}{c}{ADE} & \multicolumn{1}{c}{FDE} & \multicolumn{1}{c}{ADE} & \multicolumn{1}{c}{FDE} & \multicolumn{1}{c}{ADE} & \multicolumn{1}{c}{FDE} \\
    \midrule
       0.107$\pm$0.005   &  0.193$\pm$0.016     &  0.023$\pm$0.001     &  0.039$\pm$0.004     &  0.115$\pm$0.012     &  0.209$\pm$0.035     &  0.107$\pm$0.010     & 0.176$\pm$0.025 \\
    \midrule
    \multicolumn{2}{c}{Naphthalene} & \multicolumn{2}{c}{Salicylic} & \multicolumn{2}{c}{Toluene} & \multicolumn{2}{c}{Uracil} \\
        \cmidrule(lr){1-2}\cmidrule(lr){3-4}\cmidrule(lr){5-6}\cmidrule(lr){7-8}
    \multicolumn{1}{c}{ADE} & \multicolumn{1}{c}{FDE} & \multicolumn{1}{c}{ADE} & \multicolumn{1}{c}{FDE} & \multicolumn{1}{c}{ADE} & \multicolumn{1}{c}{FDE} & \multicolumn{1}{c}{ADE} & \multicolumn{1}{c}{FDE} \\
    \midrule
      0.064$\pm$0.002    &  0.087$\pm$0.007     &   0.083$\pm$0.004    &  0.120$\pm$0.012     &   0.083$\pm$0.004    &  0.121$\pm$0.011     &  0.074$\pm$0.003     & 0.099$\pm$0.009 \\
    \bottomrule
    \end{tabular}%
    }
  \label{tab:std_md17}%
\end{table}%

\subsection{More Discussions with Existing Works}

Below we discuss the unique challenges for designing GeoTDM compared with MID~\cite{gu2022stochastic} and geometric diffusion models like GeoDiff~\cite{xu2022geodiff} and GeoLDM~\cite{xu2023geometric}, and how we tackle these challenges.

\textbf{Modeling geometric trajectories.} Although MID can model trajectories, it leverages Trajectron++~\cite{salzmann2020trajectron++} backbone which takes as input the position vectors through a Transformer network. It requires non-trivial effort to incorporate additional node features and edge features into MID, while for GeoTDM, we design a general backbone EGTN that can process geometric trajectories while preserving equivariance. Existing geometric diffusion models (\emph{e.g.}, GeoDiff and GeoLDM) never consider modeling the temporal dynamics and their backbone can only work on static (single-frame) geometric strctures.

\textbf{Incorporating equivariance into temporal diffusion.} While geometric diffusion models have discussed proper ways to inject equivariance into diffusion models, it is unclear how to preserve equivariance when each hidden variable in the diffusion process has an additional dimension of time. In this work, we formally define equivariance constraint we want to impose on the marginal distribution, and how to design the prior and transition kernel in order to fulfill the constraint, in the context where all hidden variables are geometric trajectories. This is technically very different from existing works (\emph{e.g.}, GeoDiff and GeoLDM) since the dimension of the data is fundamentally different, which leads to different analyses.

\textbf{Consideration of both conditional and unconditional generation scenarios.} MID is only designed and evaluated in the conditional setting where the task is to forecast the future trajectory given initial frames. GeoDiff and GeoLDM only operate in the unconditional setting where the task is to generate the structure without any initial 3D structure information. In this work, we systematically discuss both unconditional and conditional generation for geometric trajectories, and elaborate on how to design the prior and transition kernel to meet the equivariance constraint.

\textbf{Parameterization of the learnable equivariant prior.} In the conditional case, we propose to parameterize the equivariant prior with a lightweight EGTN. Such appraoch offers more flexibility in the equivariant prior, enabling optimizing it during training, which is also proved to be able to subsume existing center-of-mass (CoM) based parameterization (see Theorem A.4 in Appendix). Experiments in ablation studies also verify the superiority of such design.

We summarize the points above in Table~\ref{tab:tech-compare}.

\begin{table}[htbp]
  \centering
  \caption{Technical differences between~\method and existing works.}
     \resizebox{1\textwidth}{!}{
    \begin{tabular}{lccccc}
    \toprule
          & Trajectory & Equivariance & Conditional & Unconditional & Learnable Prior \\
    \midrule
    MID~\cite{gu2022stochastic}   & $\checkmark$   &       & $\checkmark$   &       &  \\
    GeoDiff~\cite{xu2022geodiff}, GeoLDM~\cite{xu2023geometric} &       & $\checkmark$   &       & $\checkmark$   &  \\
    Our~\method & $\checkmark$   & $\checkmark$   & $\checkmark$   & $\checkmark$   & $\checkmark$ \\
    \bottomrule
    \end{tabular}%
    }
  \label{tab:tech-compare}%
\end{table}%

\section{More Visualizations}

We provide more visualizations in Fig.~\ref{fig:uncond_vis},~\ref{fig:md17_vis},~\ref{fig:cond_vis},~\ref{fig:diff_traj_vis},~\ref{fig:charge_uncond_vis}, and~\ref{fig:charge_cond_vis}. Please refer to their captions for the detailed descriptions.
\label{sec:more_vis}
\begin{figure*}[!h]
    \centering
    \includegraphics[width=0.95\linewidth]{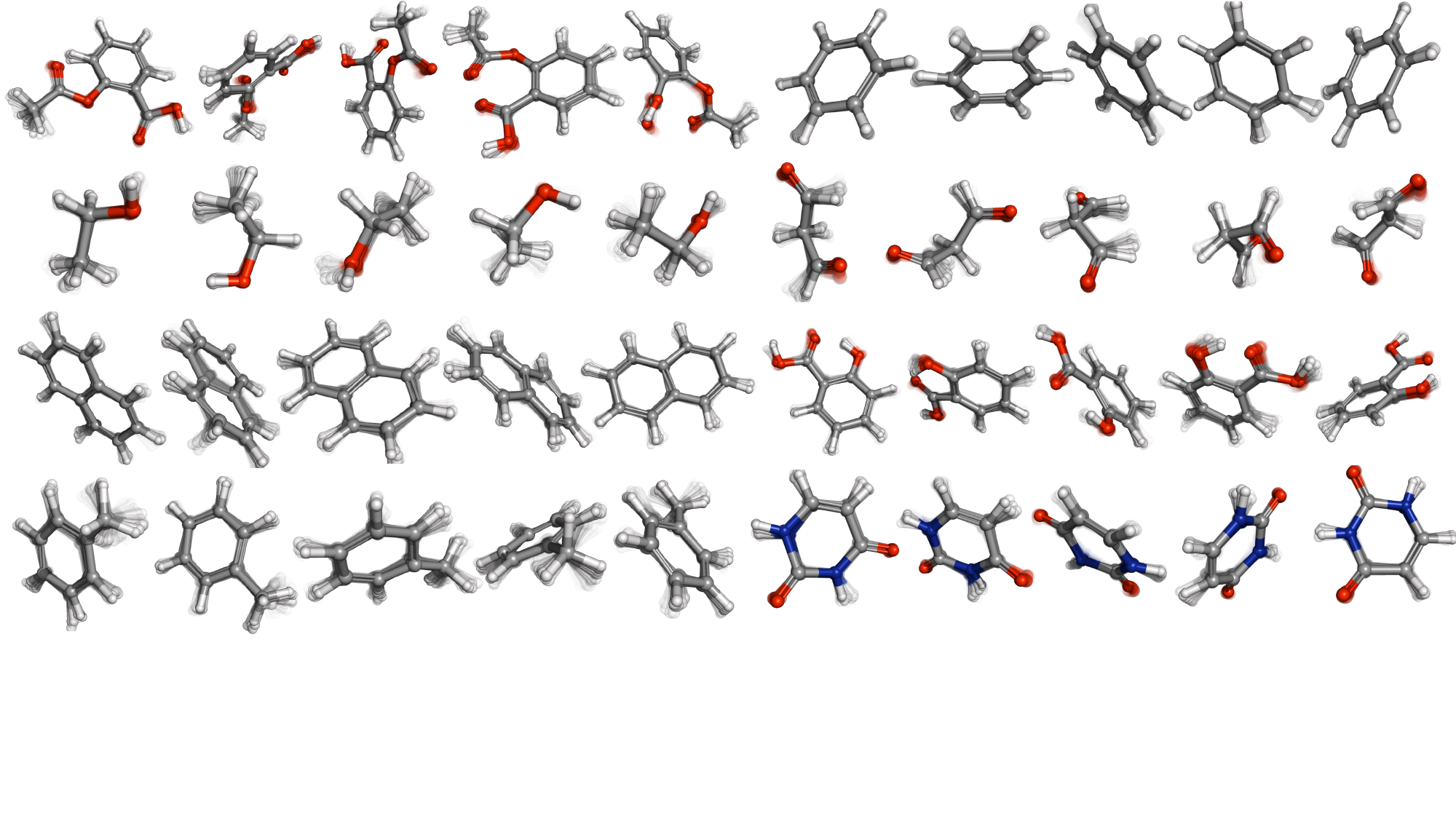}
    \caption{Uncurated samples of~\method on MD17 dataset in the unconditional generation setup. From top-left to bottom-right are trajectories of the eight molecules: Aspirin, Benzene, Ethanol, Malonaldehyde, Naphthalene, Salicylic, Toluene, and Uracil. Five samples are displayed for each molecule.~\method generates high quality samples. It well captures the vibrations and rotating behavior of the methyl groups in Aspirin and Ethanol. The bonds on the benzene ring are also more stable, aligning with findings in chemistry.}
    \label{fig:uncond_vis}
\end{figure*}

\begin{figure*}[!h]
    \centering
    \includegraphics[width=0.95\linewidth]{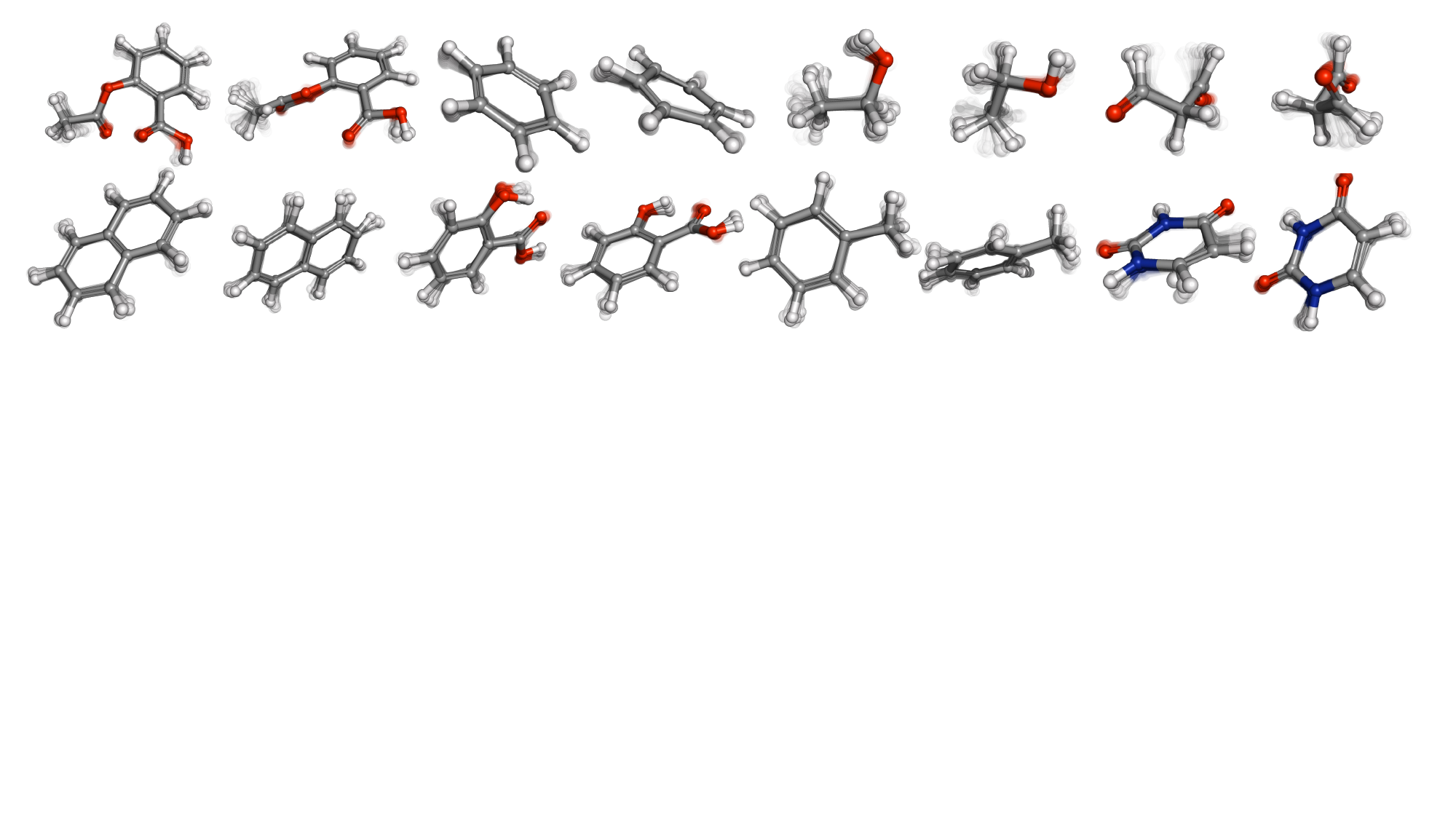}
    \caption{Samples from MD17 dataset.}
    \label{fig:md17_vis}
\end{figure*}

\begin{figure*}[!h]
    \centering
    \includegraphics[width=0.95\linewidth]{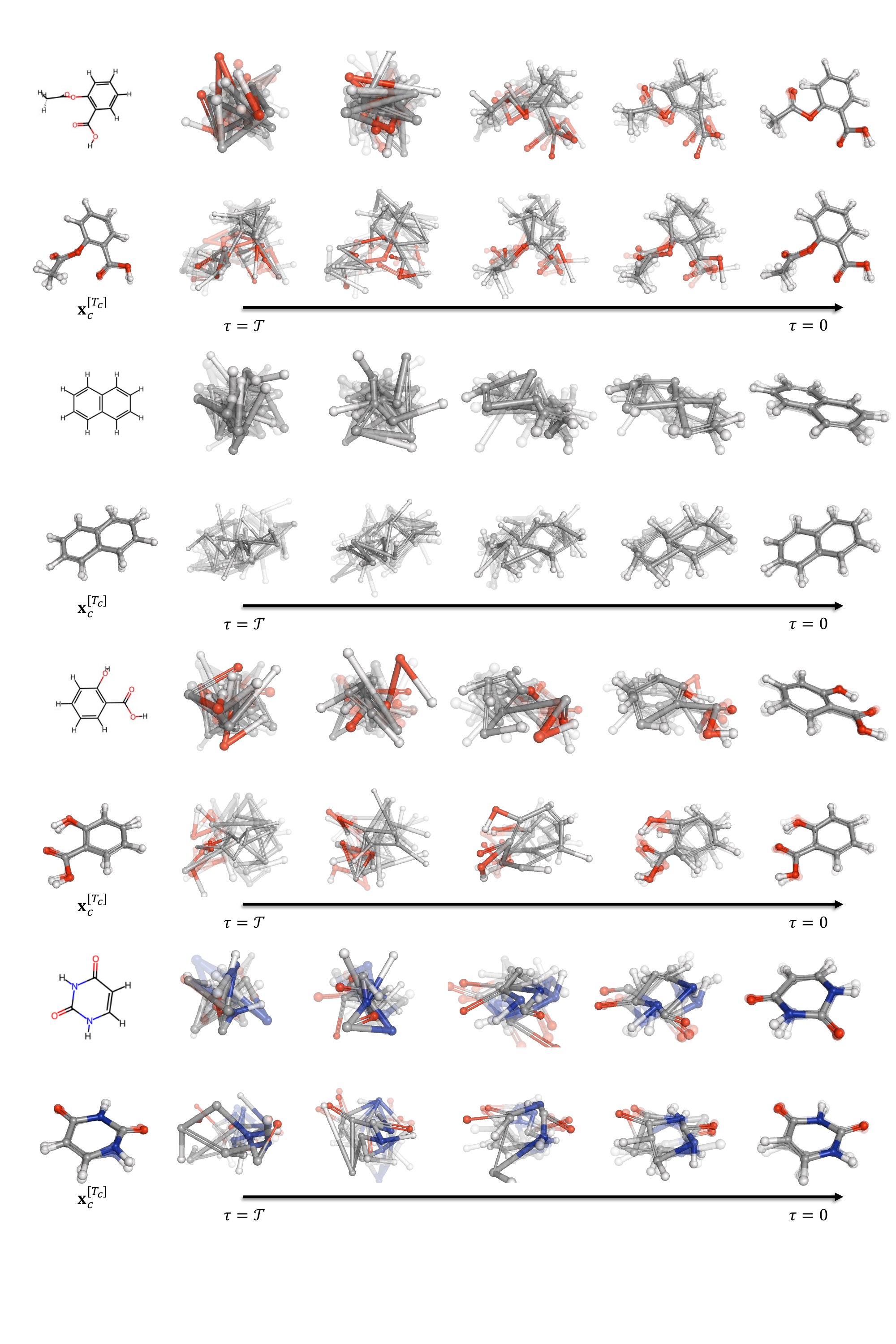}
    \vskip -0.1in
    \caption{Visualization of the diffusion trajectory at different diffusion steps. From top to bottom: \emph{Aspirin}, \emph{Naphthalene}, \emph{Salicylic}, \emph{Uracil}. For each molecule, the first row shows the unconditional generation process, where the model generates the trajectory from the \textbf{invariant prior} purely from the molecule graph without any conditioning structure. The second row refers to the conditional generation, where the model generates from the \textbf{equivariant prior}, conditioning on some given frames $\x_c^{[T_c]}$. Notably, the equivariant prior (see samples at $\tau=\gT$ in each second row) preserves some structural information encapsulated in $\x_c^{[T_c]}$, thanks to our flexible parameterization.}
    \label{fig:diff_traj_vis}
\end{figure*}

\begin{figure*}[!h]
    \centering
    \includegraphics[width=0.95\linewidth]{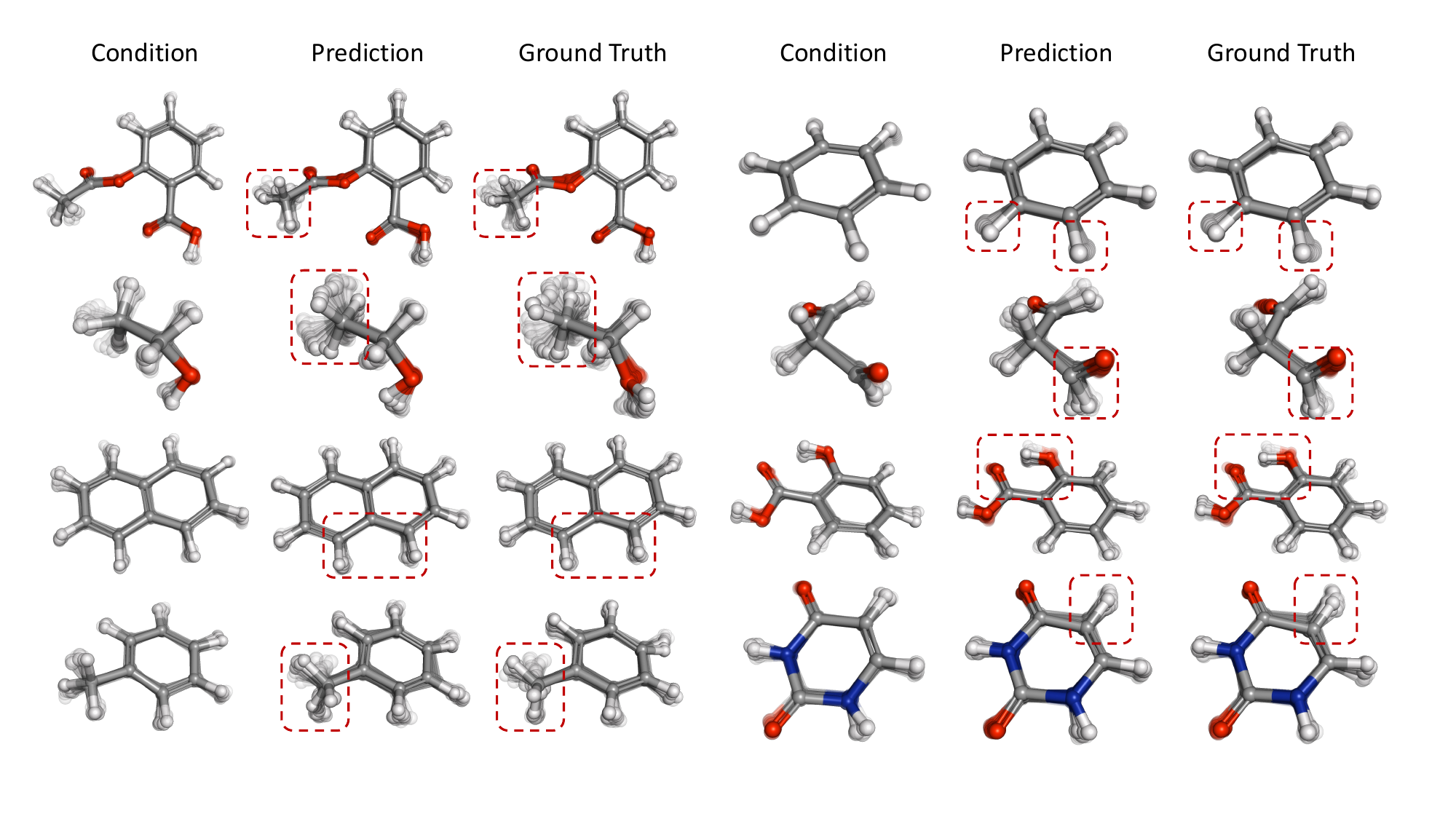}
    \caption{Uncurated samples of~\method on MD17 dataset in the conditional forecasting setting. We highlight some regions of interest in red dashed boxes.~\method delivers samples with very high accuracy while also capturing some stochasticity of the molecular dynamics.}
    \label{fig:cond_vis}
\end{figure*}

\begin{figure*}[!h]
     \centering
    \includegraphics[width=0.95\linewidth]{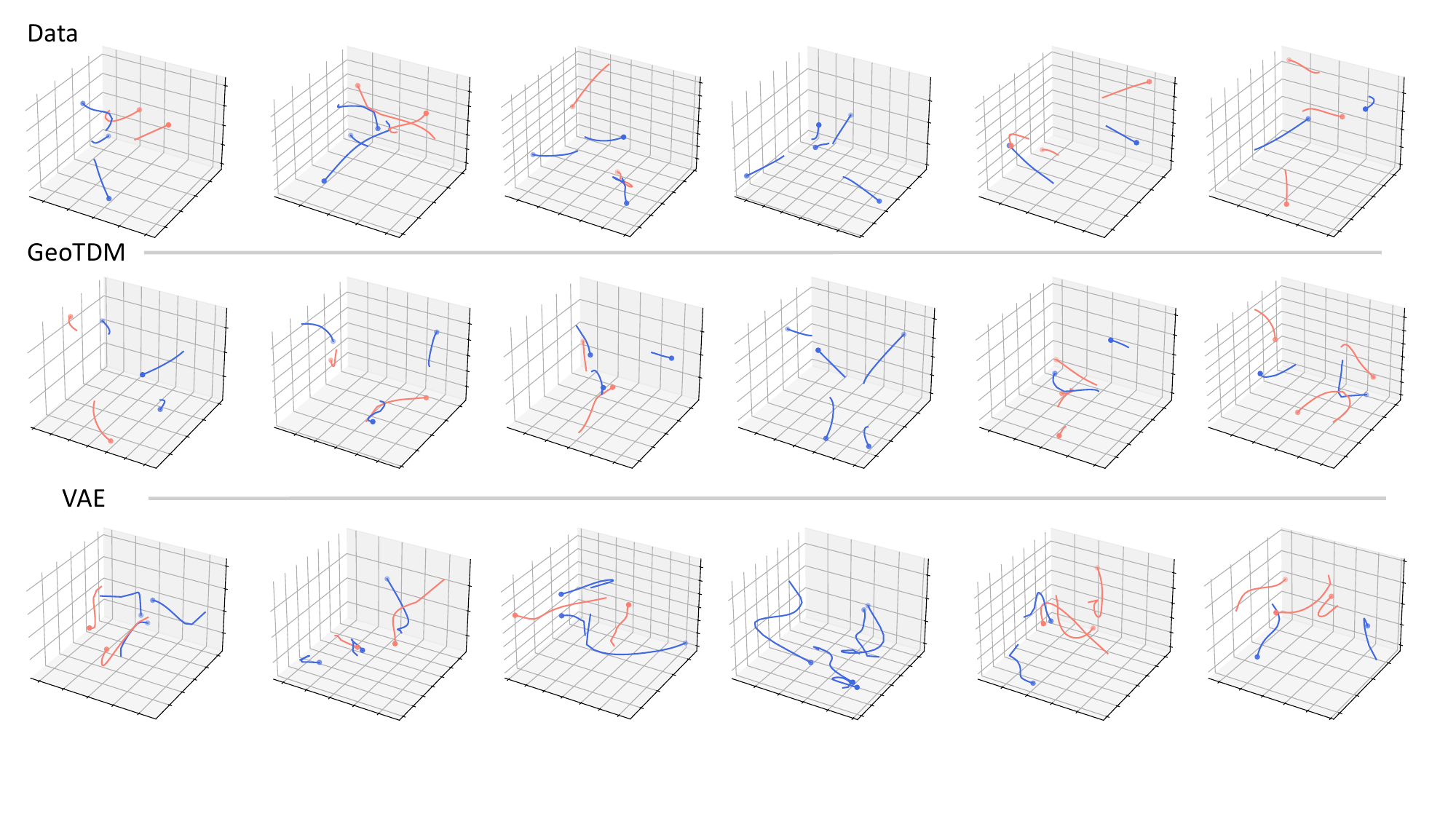}
    \caption{Visualization of data samples and generated samples by~\method and SVAE in the unconditional setting on Charged Particles dataset. Nodes with color \textcolor{red}{red} and \textcolor{blue}{blue} have the charge of +1/-1, respectively. Best viewed by zooming in.}
    \label{fig:charge_uncond_vis}
\end{figure*}

\begin{figure*}[!h]
     \centering
    \includegraphics[width=0.95\linewidth]{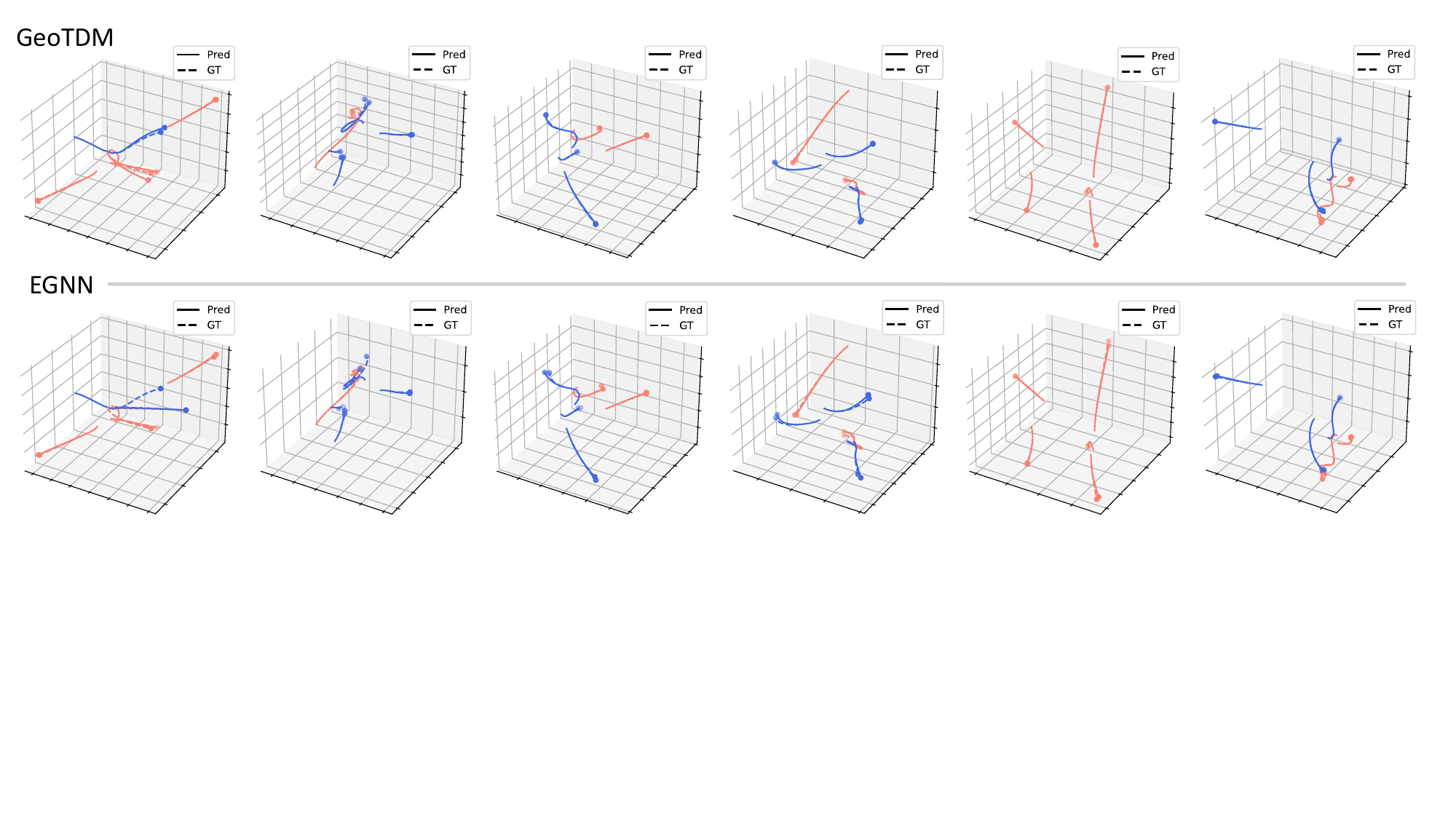}
    \caption{Visualization of predictions by~\method and EGNN in the conditional setting on Charged Particles dataset. Nodes with color \textcolor{red}{red} and \textcolor{blue}{blue} have the charge of +1/-1, respectively. Best viewed by zooming in.}
    \label{fig:charge_cond_vis}
\end{figure*}


\end{document}